\pgfplotsset{compat = newest}
\theoremstyle{plain}
\newtheorem{theorem}{Theorem}[section]
\newtheorem{lemma}[theorem]{Lemma}
\newtheorem{corollary}[theorem]{Corollary}
\theoremstyle{definition}
\newtheorem{definition}[theorem]{Definition}
\theoremstyle{remark}
\newcommand{\alink}[1]{\href{#1}{paper-link}}
\definecolor{citecolor}{HTML}{0071BC}
\definecolor{linkcolor}{HTML}{ED1C24}
\definecolor{commentcolor}{RGB}{110,154,155}   
\def\eqref#1{equation~\ref{#1}}
\def\1{\bm{1}}
\DeclareMathAlphabet{\mathsfit}{\encodingdefault}{\sfdefault}{m}{sl}
\SetMathAlphabet{\mathsfit}{bold}{\encodingdefault}{\sfdefault}{bx}{n}
\DeclareMathOperator*{\argmin}{arg\,min}
\icmltitlerunning{Unveiling the Dynamics of Information Interplay in Supervised Learning}
\begin{document}

\twocolumn[
\icmltitle{Unveiling the Dynamics of Information Interplay in Supervised Learning}

\icmlsetsymbol{equal}{*}

\begin{icmlauthorlist}
\icmlauthor{Kun Song}{equal,ustb}
\icmlauthor{Zhiquan Tan}{equal,thu}
\icmlauthor{Bochao Zou}{ustb}
\icmlauthor{Huimin Ma \textsuperscript{\dag}}{ustb}
\icmlauthor{Weiran Huang \textsuperscript{\dag}}{sjtu,ailab}
\end{icmlauthorlist}

\icmlaffiliation{ustb}{University of Science and Technology Beijing}
\icmlaffiliation{thu}{Department of Mathematical Sciences, Tsinghua University}
\icmlaffiliation{sjtu}{MIFA Lab, Qing Yuan Research Institute, SEIEE, Shanghai Jiao Tong University}
\icmlaffiliation{ailab}{Shanghai AI Laboratory}

\icmlcorrespondingauthor{Weiran Huang}{weiran.huang@outlook.com}
\icmlcorrespondingauthor{Huimin Ma}{mhmpub@ustb.edu.cn}

\icmlkeywords{matrix information theory, supervised learning, semi-supervised learning, Nerual Collapse}

\vskip 0.3in
]

\printAffiliationsAndNotice{\icmlEqualContribution} %

\begin{abstract}

In this paper, we use matrix information theory as an analytical tool to analyze the dynamics of the information interplay between data representations and classification head vectors in the supervised learning process. Specifically, inspired by the theory of Neural Collapse, we introduce matrix mutual information ratio (MIR) and matrix entropy difference ratio (HDR) to assess the interactions of data representation and class classification heads in supervised learning, and we determine the theoretical optimal values for MIR and HDR when Neural Collapse happens. Our experiments show that MIR and HDR can effectively explain many phenomena occurring in neural networks, for example, the standard supervised training dynamics, linear mode connectivity, and the performance of label smoothing and pruning. Additionally, we use MIR and HDR to gain insights into the dynamics of grokking, which is an intriguing phenomenon observed in supervised training, where the model demonstrates generalization capabilities long after it has learned to fit the training data. Furthermore, we introduce MIR and HDR as loss terms in supervised and semi-supervised learning to optimize the information interactions among samples and classification heads. The empirical results provide evidence of the method's effectiveness, demonstrating that the utilization of MIR and HDR not only aids in comprehending the dynamics throughout the training process but can also enhances the training procedure itself.

\end{abstract}

\section{Introduction}

Supervised learning is a significant part of machine learning, tracing its development back to the early days of artificial intelligence. Leveraging ample annotated data from large-scale datasets like ImageNet \cite{krizhevsky2012imagenet} and COCO \cite{lin2014microsoft}, supervised learning has achieved outstanding performance in tasks such as image recognition \cite{he2016deep, girshick2015fast, ronneberger2015u}, speech recognition \cite{hinton2012deep, chan2016listen}, and natural language processing \cite{vaswani2017attention}, thereby advancing the development of artificial intelligence. Concurrently, with its enhanced performance in real-world applications, some interesting phenomena in supervised learning, such as Neural Collapse~\cite{papyan2020prevalence}, linear mode connectivity~\cite{frankle2020linear}, and grokking \cite{power2022grokking} have emerged. More and more work is beginning to explore the reasons behind these phenomena.

Neural Collapse \cite{papyan2020prevalence} is an interesting phenomenon observed during the training process of supervised learning. In different stages of network training, features of samples within the same class become more similar in the feature space, meaning that intra-class differences decrease. At the same time, feature vectors of different classes become more distinct in feature space, leading to more significant inter-class differences. In supervised learning classification tasks, after prolonged training, a special alignment occurs, this alignment is formed between the weights of the network's final fully connected layer and the feature vectors of the classes. This indicates that for each class, the centroid of its feature vector almost coincides with the weight vector of its corresponding classifier (classification head vector).

Existing work on the theory of Neural Collapse primarily solely focuses on feature or classification head similarity, with few studies exploring the information between features and class classification heads. We introduce matrix information theory as an analytical tool, using similarity matrices constructed from sample features and class classification heads to analyze the information dynamics along the training process.
According to Neural Collapse, for a well-trained model at its terminal phase, sample features and the corresponding class classification heads align well. Thus, at the stage of Neural Collapse, the similarity matrix of sample features aligns with the similarity matrix constructed by the corresponding class classification heads. Therefore, we first theoretically calculate their matrix mutual information ratio and the matrix entropy difference ratio at the point of Neural Collapse. We find that at the point of Neural Collapse, the theoretical MIR is nearing its maximum, and the HDR reaches its theoretical minimum. Therefore, we expect an increase in MIR and a decrease in HDR during the training process. Experiments demonstrate that MIR and HDR effectively describe such phenomena during training. Motivated by the success of understanding dynamics in standard training, we also explore their uses in other settings like linear mode connectivity, label smoothing, pruning, and grokking. Compared to accuracy, MIR and HDR not only can describe the above phenomena but also possess unique analytical significance. We also explore integrating information constraints on model performance, by adding MIR and HDR as a loss term in supervised and semi-supervised learning. Experiments show that adding additional information constraints during training effectively enhances model performance, especially in semi-supervised learning with limited labeled samples, where information constraints help the model better learn information from unlabeled samples.

Our contributions are as follows:

1. Motivated by Neural Collapse and matrix information theory, we introduce two new metrics: Matrix Mutual Information Ratio (MIR) and Matrix Entropy Difference Ratio (HDR), for which we also deduce their theoretical values when Neural Collapse happens.

2. Through rigorous experiments, we find that MIR and HDR are capable of explaining various phenomena, such as the standard training of supervised learning, linear mode connectivity, pruning, label smoothing, and grokking. 

3. We integrate matrix mutual information and information entropy differences as a loss term in both supervised and semi-supervised learning. Experiments demonstrate that these information metrics can effectively improve model performance.

\section{Related Work}

\paragraph{Neural network training phenomenon.}

Recent research has revealed several interesting phenomena that are significant for understanding the behavior and learning dynamics of neural networks. Firstly, \citet{papyan2020prevalence} observe that in the final stages of deep neural network training, the feature vectors of the last layer tend to converge to their class centroids, and these class centroids align with the weights of the corresponding class in the final fully connected layer. This phenomenon is known as \textbf{Neural Collapse}. Neural Collapse occurs in both MSE loss and cross-entropy loss \cite{han2021neural, zhou2022all}. Secondly, \citet{frankle2020linear} find that models trained from the same starting point, even when changing the input data sequence and data augmentation, eventually converge to the same local area. This phenomenon is termed \textbf{Linear Mode Connectivity}, which is influenced by architecture, training strategy, and dataset \cite{altintacs2023disentangling}. Lastly, \citet{power2022grokking} discover that after prolonged training, models can transition from simply memorizing data to inductively processing it. This phenomenon is known as the \textbf{Grokking}. \citet{nanda2022progress} finds the connections of grokking on the modulo addition task with trigonometric functions.

\paragraph{Information theory.}

Traditional information theory provides a universally applicable set of fundamental concepts and metrics to understand the relationship between probability distributions and information \cite{wang2021adaptive}. However, when dealing with high-dimensional data and complex data structures, traditional information theory tools struggle to analyze higher-order relationships within the data. As an extension and advancement of traditional information theory, matrix information theory broadens the scope of information theory to encompass the analysis of inter-matrix relationships. This enables a better understanding of the latent structures in data and more effective handling of complex relationships in high-dimensional data \cite{bach2022information}. There have been works that utilize matrix mutual information to analyze neural networks. \citet{tan2023information} use matrix mutual information to study the Siamese architecture self-supervised learning methods. \citet{zhang2023matrix} point out the relationship between effective rank, matrix entropy, and equiangular tight frame.

\paragraph{Semi-supervised learning.}
Semi-supervised learning focuses on how to train a better model using a small number of labeled data and a large number of unlabeled data \citep{sohn2020fixmatch, zhang2021flexmatch, chen2023softmatch, tan2023otmatch, wang2022freematch, tan2023seal, zhang2023relationmatch}. FixMatch \cite{sohn2020fixmatch} ingeniously integrate consistency regularization with pseudo-labeling techniques. MixMatch \cite{berthelot2019mixmatch} amalgamates leading SSL methodologies, achieving a substantial reduction in error rates and bolstering privacy protection. FlexMatch \cite{zhang2021flexmatch} introduces Curriculum pseudo-labeling to improve semi-supervised learning by dynamically adapting to the model's learning status, showing notable efficacy in challenging scenarios with limited labeled data. Softmatch \cite{chen2023softmatch} efficiently balances the quantity and quality of pseudo-labels in semi-supervised learning, demonstrating significant performance improvements in diverse applications including image and text classification. FreeMatch \cite{wang2022freematch} innovates in semi-supervised learning by self-adaptively adjusting confidence thresholds and incorporating class fairness regularization, significantly outperforming existing methods in scenarios with scarce labeled data. How to more accurately utilize the information of unlabeled data remains an important problem in the field of semi-supervised learning.

\section{Preliminaries}

\subsection{Supervised classification problem}
Given a labeled dataset $\{(\mathbf{x}_i , y_i )  \}^n_{i=1}$, where $y_i \in \{1, 2, \cdots, C   \}$ is the class label. In this paper, we mainly consider training an image classification problem by concatenation of a deep neural network $h$ and a linear classifier. The linear classifier consists of a weight matrix $\mathbf{W} \in \mathbb{R}^{{C \times d}}$ and $\mathbf{b} \in \mathbb{R}^{{C \times 1}}$. Denote $\mathbf{W}^T = [w_1 \cdots w_C]$. The training loss is the cross-entropy loss.
$$
\mathcal{H}(p, q) = -\sum_{i=0}^n{p(x_i)\log q(x_i)},
$$ 
where $p$ is the true probability distribution, and $q$ is the predicted probability distribution.
 
\subsection{Matrix entropy and mutual information}

The following definitions of matrix entropy and matrix mutual information are taken from paper \citep{skean2023dime}.

\begin{definition}[Matrix entropy] Suppose a positive-definite matrix $\mathbf{K} \in \mathbb{R}^{d \times d}$ which ${\mathbf{K}(i, i)}=1$ ($1 \leq i \leq d$). The matrix entropy is defined as follows:
$$
\operatorname{H}\left(\mathbf{K}\right)=-\operatorname{tr}\left(\frac{1}{d} \mathbf{K} \log  \frac{1}{d} \mathbf{K} \right).
$$
    
\end{definition}

In the following we assume that $\mathbf{K}_j \in \mathbb{R}^{d \times d}$ which ${\mathbf{K}_j(i, i)}=1$ ($1 \leq i \leq d$, $j=1,2$).

\begin{definition}[Matrix mutual information]
The matrix mutual information is defined as follows:
$$
\operatorname{MI}\left(\mathbf{K}_1, \mathbf{K}_2\right) = \operatorname{H}\left(\mathbf{K}_1\right) + \operatorname{H}\left(\mathbf{K}_2\right) - 
 \operatorname{H}(\mathbf{K}_1 \odot \mathbf{K}_2) ,
$$
where $\odot$ is the Hardmard product.

\end{definition}

Based on the two definitions above, we can introduce the following concepts, which measure the normalized information interactions between matrices.

\begin{definition}[Matrix mutual information ratio (MIR)] \label{MIR}
The matrix mutual information ratio is defined as follows:
$$
\operatorname{MIR}\left(\mathbf{K}_{1}, \mathbf{K}_2 \right) = \frac{\operatorname{MI}\left(\mathbf{K}_1, \mathbf{K}_2\right)}{\min \{ \operatorname{H}(\mathbf{K}_1), \operatorname{H}(\mathbf{K}_2) \}}.
$$
    
\end{definition}

\begin{definition}[Matrix entropy difference ratio (HDR)] \label{HDR}
The matrix entropy difference ratio is defined as follows:
$$
\operatorname{HDR}\left(\mathbf{K}_{1}, \mathbf{K}_2 \right) = \frac{| \operatorname{H}(\mathbf{K}_1) - \operatorname{H}(\mathbf{K}_2) |}{\max \{ \operatorname{H}(\mathbf{K}_1), \operatorname{H}(\mathbf{K}_2) \}}.
$$
    
\end{definition}

\section{Theoretic Insights in Supervised Learning}

\subsection{Neural collapse}

Neural Collapse (NC) is an interesting phenomenon \citep{papyan2020prevalence} appeared at the terminal phase of the classification problem. We will briefly summarize the $3$ most important NC conditions for our paper as follows.

Denote $\mu_G =  \frac{\sum^n_{i=1}  h(\mathbf{x}_i)}{n}$ and $\mu_c = \frac{\sum_{y_i=c}  h(\mathbf{x}_i)}{\# \{ y_i=c\}}$ be the global mean and class-wise mean respectively. Then we can define $\tilde{\mu}_c =  \mu_c - \mu_G$.

(NC 1) $h(\mathbf{x}_i) =  \mu_{y_i}$  ($i=1,2,\cdots,n$). 

(NC 2) $\text{cos} (\tilde{\mu}_i , \tilde{\mu}_j) =  \frac{C}{C-1} \delta^i_j - \frac{1}{C-1}$, where $\text{cos}$ is the cosine similarity and $ \delta^i_j $ is Kronecker symbol.

(NC 3) $\frac{\mathbf{W}^T}{\| \mathbf{W}\|_F} =  \frac{\mathbf{M}}{\| \mathbf{M} \|_F}$, where $\mathbf{M}=[\tilde{\mu}_1 \cdots \tilde{\mu}_C]$.

In this paper, the matrics used in the matrix information quantities usually is the similarity (gram) matrix. For ease of exposition, we introduce a standard way of constructing a similarity (gram) matrix as follows.

\begin{definition}[Construction of similarity (gram) matrix] \label{gram}
Given a set of representations $\mathbf{Z} = [\mathbf{z}_1 \cdots \mathbf{z}_N] \in \mathbb{R}^{d \times N}$. Denote the $l_2$ normalized feature $\hat{\mathbf{z}}_i = \frac{\mathbf{z}_i}{\| \mathbf{z}_i \|}, $ $ \hat{\mathbf{Z}} = [\hat{\mathbf{z}}_1 \cdots \hat{\mathbf{z}}_N] $. Then gram matrix is defined as $\mathbf{G}(\mathbf{Z}) = \hat{\mathbf{Z}}^T\hat{\mathbf{Z}}$.
    
\end{definition}

Note that Neural Collapse conditions impose structural information on the weight matrix and class means, we provide the matrix mutual information ratio and matrix entropy difference ratio of this imposed structure in Theorem \ref{direct NC}.

\begin{theorem} \label{direct NC}
Suppose Neural collapse happens. Then $\operatorname{HDR}(\mathbf{G}(\mathbf{W}^T), \mathbf{G}(\mathbf{M})) = 0$ and $\operatorname{MIR}(\mathbf{G}(\mathbf{W}^T), \mathbf{G}(\mathbf{M})) = \frac{1}{C-1} + \frac{(C-2)\log(C-2)}{(C-1)\log(C-1)}$. 
\end{theorem}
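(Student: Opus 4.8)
The plan is to first collapse the two-matrix problem into a single-matrix computation using (NC 3). That condition states $\mathbf{W}^T = \frac{\|\mathbf{W}\|_F}{\|\mathbf{M}\|_F}\mathbf{M}$, i.e. each column $w_c$ is the \emph{same} positive scalar multiple of $\tilde{\mu}_c$. Since the construction in Definition~\ref{gram} normalizes every column to unit length before forming $\hat{\mathbf{Z}}^T\hat{\mathbf{Z}}$, this common positive factor cancels columnwise, so $\mathbf{G}(\mathbf{W}^T) = \mathbf{G}(\mathbf{M}) =: \mathbf{G}$. Equal matrices have equal entropy, so $\operatorname{HDR}(\mathbf{G}(\mathbf{W}^T),\mathbf{G}(\mathbf{M})) = 0$ follows immediately, and the mutual information ratio reduces to the self-expression
\[
\operatorname{MIR}(\mathbf{G},\mathbf{G}) = \frac{2\operatorname{H}(\mathbf{G}) - \operatorname{H}(\mathbf{G}\odot\mathbf{G})}{\operatorname{H}(\mathbf{G})},
\]
using $\min\{\operatorname{H}(\mathbf{G}),\operatorname{H}(\mathbf{G})\} = \operatorname{H}(\mathbf{G})$ and the definition of $\operatorname{MI}$.

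Next I would write $\mathbf{G}$ out explicitly from (NC 2). Its $(i,j)$ entry is the cosine similarity $\cos(\tilde{\mu}_i,\tilde{\mu}_j) = \frac{C}{C-1}\delta^i_j - \frac{1}{C-1}$, so $\mathbf{G} = \frac{C}{C-1}\bigl(\mathbf{I} - \frac{1}{C}\mathbf{J}\bigr)$, where $\mathbf{J}$ is the all-ones matrix. The factor $\mathbf{I} - \frac{1}{C}\mathbf{J}$ is the orthogonal projection off the all-ones direction, with eigenvalue $1$ of multiplicity $C-1$ and eigenvalue $0$ once. Hence $\frac{1}{C}\mathbf{G}$ has normalized eigenvalues $\frac{1}{C-1}$ (multiplicity $C-1$) and $0$ (once). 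Applying the matrix entropy formula $\operatorname{H}(\mathbf{K}) = -\sum_k \lambda_k\log\lambda_k$ to these normalized eigenvalues $\lambda_k$ gives $\operatorname{H}(\mathbf{G}) = \log(C-1)$.

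Then I would handle the Hadamard square. Squaring entrywise sends the diagonal $1$ to $1$ and the off-diagonal $-\frac{1}{C-1}$ to $\frac{1}{(C-1)^2}$, so $\mathbf{G}\odot\mathbf{G} = a\mathbf{I} + b\mathbf{J}$ with $a = \frac{C(C-2)}{(C-1)^2}$ and $b = \frac{1}{(C-1)^2}$. A rank-one perturbation of a multiple of the identity has eigenvalues $a + bC$ (once, along $\mathbf{1}$) and $a$ (multiplicity $C-1$), namely $\frac{C}{C-1}$ and $\frac{C(C-2)}{(C-1)^2}$. Scaling by $\frac{1}{C}$ and feeding the resulting normalized eigenvalues into the entropy formula yields $\operatorname{H}(\mathbf{G}\odot\mathbf{G}) = \frac{2C-3}{C-1}\log(C-1) - \frac{C-2}{C-1}\log(C-2)$. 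Substituting $\operatorname{H}(\mathbf{G})$ and $\operatorname{H}(\mathbf{G}\odot\mathbf{G})$ into the reduced MIR expression, the coefficient of $\log(C-1)$ in the numerator collapses to $\frac{1}{C-1}$, leaving $\frac{1}{C-1}\log(C-1) + \frac{C-2}{C-1}\log(C-2)$; dividing by $\operatorname{H}(\mathbf{G}) = \log(C-1)$ gives exactly the claimed $\frac{1}{C-1} + \frac{(C-2)\log(C-2)}{(C-1)\log(C-1)}$.

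The conceptual steps (the reduction via (NC 3) and reading the spectrum off the scaled-projection form from (NC 2)) are routine; the real work is bookkeeping. I expect the main obstacle to be the spectral computation of the Hadamard square together with the logarithmic simplification: one must correctly identify the $a\mathbf{I}+b\mathbf{J}$ structure and its eigenvalue $a+bC$, and then carefully combine the $\log(C-1)$ terms. A minor point worth flagging is the $C=2$ edge case, where $\log(C-2) = \log 0$ formally appears but is multiplied by the vanishing weight $(C-2)$, so the convention $0\log 0 = 0$ keeps both $\operatorname{H}(\mathbf{G}\odot\mathbf{G})$ and the final MIR well-defined.
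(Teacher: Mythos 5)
Your proposal is correct and follows essentially the same route as the paper's proof: use (NC~3) to conclude $\mathbf{G}(\mathbf{W}^T)=\mathbf{G}(\mathbf{M})$ (hence $\operatorname{HDR}=0$), then read the spectra of the common Gram matrix and its Hadamard square off (NC~2) --- your $a\mathbf{I}+b\mathbf{J}$ decomposition is the paper's $\mathcal{E}(\alpha)=(1-\alpha)\mathbf{I}_C+\alpha\mathbf{1}_C^T\mathbf{1}_C$ in different notation --- and substitute the resulting entropies $\log(C-1)$ and $\frac{2C-3}{C-1}\log(C-1)-\frac{C-2}{C-1}\log(C-2)$ into the definition of MIR. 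Your explicit handling of the $C=2$ edge case via the $0\log 0=0$ convention is a small point the paper omits, but it does not change the argument.
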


The proof can be seen in Appendix \ref{direct NC Appendix}. As the linear weight matrix $\mathbf{W}$ can be seen as (prototype) embedding for each class. It is natural to consider the mutual information and entropy difference between sample embedding and label embedding. We discuss this in the following Corollary \ref{feature NC}.

\begin{corollary} \label{feature NC}
Suppose the dataset is class-balanced, $\mu_G =0$ and Neural collapse happens. Denote $\mathbf{Z}_1 = [h(\mathbf{x}_1) \cdots h(\mathbf{x}_n)] \in \mathbb{R}^{d \times n}$ and $\mathbf{Z}_2 = [w_{y_1} \cdots w_{y_n}] \in \mathbb{R}^{d \times n}$. Then $\operatorname{HDR}(\mathbf{Z}_1, \mathbf{Z}_2) = 0$ and $\operatorname{MIR}(\mathbf{Z}_1, \mathbf{Z}_2) = \frac{1}{C-1} + \frac{(C-2)\log(C-2)}{(C-1)\log(C-1)}$.
\end{corollary}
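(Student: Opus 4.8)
The plan is to show that under the stated hypotheses the two gram matrices are in fact \emph{identical}, and then to reduce the entropy computation to the one already established in Theorem~\ref{direct NC}; throughout I read $\operatorname{HDR}(\mathbf{Z}_1,\mathbf{Z}_2)$ and $\operatorname{MIR}(\mathbf{Z}_1,\mathbf{Z}_2)$ as shorthand for $\operatorname{HDR}(\mathbf{G}(\mathbf{Z}_1),\mathbf{G}(\mathbf{Z}_2))$ and $\operatorname{MIR}(\mathbf{G}(\mathbf{Z}_1),\mathbf{G}(\mathbf{Z}_2))$ in the sense of Definition~\ref{gram}. First I would unpack the Neural Collapse conditions: by (NC 1) and $\mu_G=0$ we have $h(\mathbf{x}_i)=\mu_{y_i}=\tilde\mu_{y_i}$, so the $i$-th column of $\mathbf{Z}_1$ is $\tilde\mu_{y_i}$, while the $i$-th column of $\mathbf{Z}_2$ is $w_{y_i}$. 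Condition (NC 3) gives $\mathbf{W}^T=\frac{\|\mathbf{W}\|_F}{\|\mathbf{M}\|_F}\mathbf{M}$, i.e.\ $w_c=\lambda\,\tilde\mu_c$ with the single positive scalar $\lambda=\|\mathbf{W}\|_F/\|\mathbf{M}\|_F>0$. Since the gram construction $\ell_2$-normalizes each column, the positive scaling cancels and $\widehat{w_{y_i}}=\widehat{\tilde\mu_{y_i}}=\widehat{h(\mathbf{x}_i)}$ for every $i$. Hence the normalized feature matrices coincide and $\mathbf{G}(\mathbf{Z}_1)=\mathbf{G}(\mathbf{Z}_2)=:\mathbf{K}$. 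In particular $\operatorname{H}(\mathbf{G}(\mathbf{Z}_1))=\operatorname{H}(\mathbf{G}(\mathbf{Z}_2))$, which makes $\operatorname{HDR}(\mathbf{Z}_1,\mathbf{Z}_2)=0$ immediate.

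For the MIR value I would exploit the block structure coming from class balance. Writing $m=n/C$ for the common class size and ordering the samples by class, (NC 2) shows $\mathbf{K}=\mathbf{G}(\mathbf{W}^T)\otimes \mathbf{J}_m$, where $\mathbf{J}_m$ is the $m\times m$ all-ones matrix and $\mathbf{G}(\mathbf{W}^T)\in\mathbb{R}^{C\times C}$ is exactly the class-level gram matrix appearing in Theorem~\ref{direct NC}. The key reduction is an entropy-invariance observation: tensoring with $\mathbf{J}_m$ multiplies every nonzero eigenvalue by $m$ and adds only zero eigenvalues, while the normalization dimension grows from $C$ to $n=Cm$. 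Since $\tfrac{m\lambda}{n}=\tfrac{\lambda}{C}$ and zero eigenvalues contribute nothing to $-\operatorname{tr}(\tfrac1d\mathbf{K}\log\tfrac1d\mathbf{K})$, we get $\operatorname{H}(\mathbf{G}(\mathbf{Z}_1))=\operatorname{H}(\mathbf{G}(\mathbf{W}^T))$. The same argument applies to the Hadamard square: because $\mathbf{J}_m\odot\mathbf{J}_m=\mathbf{J}_m$, the block ordering gives $\mathbf{K}\odot\mathbf{K}=(\mathbf{G}(\mathbf{W}^T)\odot\mathbf{G}(\mathbf{W}^T))\otimes\mathbf{J}_m$, so $\operatorname{H}(\mathbf{G}(\mathbf{Z}_1)\odot\mathbf{G}(\mathbf{Z}_2))=\operatorname{H}(\mathbf{G}(\mathbf{W}^T)\odot\mathbf{G}(\mathbf{M}))$. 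Substituting these three equalities into the definitions of $\operatorname{MI}$ and $\operatorname{MIR}$ shows the MIR of $(\mathbf{Z}_1,\mathbf{Z}_2)$ equals that of $(\mathbf{G}(\mathbf{W}^T),\mathbf{G}(\mathbf{M}))$, which Theorem~\ref{direct NC} already evaluates to $\frac{1}{C-1}+\frac{(C-2)\log(C-2)}{(C-1)\log(C-1)}$.

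If one prefers a self-contained computation, the eigenvalues follow directly from the Kronecker form: $\mathbf{K}=\big(\tfrac{C}{C-1}\mathbf{I}_C-\tfrac1{C-1}\mathbf{J}_C\big)\otimes\mathbf{J}_m$ has nonzero eigenvalue $n/(C-1)$ with multiplicity $C-1$, giving $\operatorname{H}(\mathbf{K})=\log(C-1)$; the entrywise square replaces the off-diagonal $-\tfrac1{C-1}$ by $\tfrac1{(C-1)^2}$, yielding nonzero eigenvalues $n/(C-1)$ (once) and $(C-2)n/(C-1)^2$ ($C-1$ times), from which $\operatorname{MI}=\tfrac1{C-1}\log(C-1)+\tfrac{C-2}{C-1}\log(C-2)$ follows after simplification, and dividing by $\operatorname{H}(\mathbf{K})=\log(C-1)$ recovers the claimed ratio.

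I expect the main obstacle to be the entropy-invariance lemma, namely verifying carefully that padding the spectrum with zeros and uniformly rescaling the nonzero eigenvalues leaves the matrix entropy unchanged once the normalization dimension is updated from $C$ to $n$; this is precisely where the class-balance hypothesis is essential. A secondary point requiring care is the positivity of the NC3 proportionality constant $\lambda$, which is needed so that column-wise $\ell_2$ normalization genuinely identifies $\mathbf{Z}_1$ and $\mathbf{Z}_2$ rather than matching them only up to sign.
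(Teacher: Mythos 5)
Your proof is correct and follows essentially the same route as the paper's own proof: both exploit class balance to identify the sample-level gram matrices with the Kronecker product $\mathcal{E}\!\left(-\tfrac{1}{C-1}\right)\otimes\mathbf{1}_{m,m}$, observe that tensoring with the all-ones block merely scales the nonzero spectrum by $m$ (exactly cancelled by renormalizing from $C$ to $n$) while padding with zeros, and thereby reduce the entropy and mutual-information computation to Theorem~\ref{direct NC}. If anything, your write-up is more careful than the paper's, which cites only (NC~1) and leaves implicit both the role of (NC~3)'s positive proportionality constant in identifying $\mathbf{G}(\mathbf{Z}_1)$ with $\mathbf{G}(\mathbf{Z}_2)$ and the spectrum-padding argument you spell out.
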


\textbf{Remark:} Note $ \frac{1}{C-1} + \frac{(C-2)\log(C-2)}{(C-1)\log(C-1)} \approx \frac{1}{C-1} + \frac{(C-2)\log(C-1)}{(C-1)\log(C-1)} = 1$ and MIR, HDR $\in [0,1]$. These facts make quantities obtained by Theorem \ref{direct NC} and \ref{feature NC} very interesting, as HDR reaches the minimum possible value and MIR approximately reaches the highest possible value.

\subsection{Some theoretical insights for HDR}

Mutual information is a very intuitive quantity in information theory. On the other hand, it seems weird to consider the difference of entropy, but we will show that this quantity is closely linked with comparing the approximation ability of different representations on the same target.

For ease of theoretical analysis, in this section, we consider the MSE regression loss.

The following Lemma \ref{approx} shows that the regression of two sets of representations $\mathbf{Z}_1$ and $\mathbf{Z}_2$ to the same target $\mathbf{Y}$ are closely related. And the two approximation errors are closely related to the regression error of $\mathbf{Z}_1$ to $\mathbf{Z}_2$.

\begin{lemma} \label{approx}
Suppose $\mathbf{W}^*_1, \mathbf{b}^*_1 = \argmin_{\mathbf{W}, \mathbf{b}}  \|\mathbf{Y} - (\mathbf{W} \mathbf{Z}_1 + \mathbf{b} \mathbf{1}_N )\|_F$. Then $\min_{\mathbf{W}, \mathbf{b}} \|\mathbf{Y} - (\mathbf{W} \mathbf{Z}_2 + \mathbf{b} \mathbf{1}_N )\|_F \leq \min_{\mathbf{W}, \mathbf{b}}  \|\mathbf{Y} - (\mathbf{W} \mathbf{Z}_1 + \mathbf{b} \mathbf{1}_N )\|_F + \| \mathbf{W}^*_1\|_F \min_{\mathbf{H}, \mathbf{\eta}}  \|\mathbf{Z}_1 - (\mathbf{H} \mathbf{Z}_2 + \mathbf{\eta} \mathbf{1}_N )\|_F$.   
\end{lemma}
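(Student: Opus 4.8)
The plan is to exhibit a single, explicitly constructed (and in general suboptimal) affine predictor of $\mathbf{Y}$ from $\mathbf{Z}_2$, compute its error, and then observe that the left-hand minimum can only be smaller. Let $\mathbf{H}^*, \mathbf{\eta}^*$ attain $\min_{\mathbf{H}, \mathbf{\eta}} \|\mathbf{Z}_1 - (\mathbf{H}\mathbf{Z}_2 + \mathbf{\eta}\mathbf{1}_N)\|_F$, so that $\mathbf{H}^*\mathbf{Z}_2 + \mathbf{\eta}^*\mathbf{1}_N$ is the best affine reconstruction of $\mathbf{Z}_1$ from $\mathbf{Z}_2$. Formally substituting this reconstruction in place of $\mathbf{Z}_1$ inside the optimal $\mathbf{Z}_1$-predictor $\mathbf{W}^*_1 \mathbf{Z}_1 + \mathbf{b}^*_1 \mathbf{1}_N$ suggests the candidate weight $\mathbf{W} = \mathbf{W}^*_1 \mathbf{H}^*$ and bias $\mathbf{b} = \mathbf{W}^*_1 \mathbf{\eta}^* + \mathbf{b}^*_1$. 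Since $\min_{\mathbf{W},\mathbf{b}}\|\mathbf{Y}-(\mathbf{W}\mathbf{Z}_2+\mathbf{b}\mathbf{1}_N)\|_F$ is a minimum over all $(\mathbf{W},\mathbf{b})$, it is bounded above by the error of this particular choice.

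Next I would rewrite the residual of the candidate predictor by adding and subtracting the optimal $\mathbf{Z}_1$-prediction. A direct computation gives the identity
$$
\mathbf{Y} - \big(\mathbf{W}^*_1\mathbf{H}^*\mathbf{Z}_2 + (\mathbf{W}^*_1\mathbf{\eta}^*+\mathbf{b}^*_1)\mathbf{1}_N\big) = \big[\mathbf{Y} - (\mathbf{W}^*_1\mathbf{Z}_1+\mathbf{b}^*_1\mathbf{1}_N)\big] + \mathbf{W}^*_1\big[\mathbf{Z}_1 - (\mathbf{H}^*\mathbf{Z}_2+\mathbf{\eta}^*\mathbf{1}_N)\big],
$$
which splits the candidate's error into the irreducible $\mathbf{Z}_1$-regression residual plus $\mathbf{W}^*_1$ applied to the $\mathbf{Z}_1$-from-$\mathbf{Z}_2$ reconstruction residual. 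Applying the triangle inequality for $\|\cdot\|_F$, then the submultiplicative bound $\|\mathbf{W}^*_1 \mathbf{X}\|_F \le \|\mathbf{W}^*_1\|_F \|\mathbf{X}\|_F$ to the second term, and finally recognizing that the two resulting norms equal exactly $\min_{\mathbf{W},\mathbf{b}}\|\mathbf{Y}-(\mathbf{W}\mathbf{Z}_1+\mathbf{b}\mathbf{1}_N)\|_F$ and $\min_{\mathbf{H},\mathbf{\eta}}\|\mathbf{Z}_1-(\mathbf{H}\mathbf{Z}_2+\mathbf{\eta}\mathbf{1}_N)\|_F$ respectively, yields the claimed inequality.

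There is no genuinely hard step; the argument is essentially a well-chosen test point plus two standard norm inequalities. The only points requiring care are bookkeeping: checking that the dimensions compose correctly so that $\mathbf{W}^*_1\mathbf{H}^*$ and $\mathbf{W}^*_1\mathbf{\eta}^*$ act sensibly as a weight and bias on $\mathbf{Z}_2$, and confirming that the minimizers $\mathbf{H}^*, \mathbf{\eta}^*$ and $\mathbf{W}^*_1, \mathbf{b}^*_1$ exist (each is a linear least-squares problem, so the Frobenius distance to the relevant closed affine subspace is always attained, even if the parameters are not unique). If attainment were a concern, I could instead run the argument with an arbitrary $(\mathbf{H},\mathbf{\eta})$ and pass to the infimum at the very end, thereby avoiding any appeal to existence of a minimizer.
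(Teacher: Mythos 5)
Your proposal is correct and follows essentially the same route as the paper's own proof: pick the optimal affine reconstruction $\mathbf{H}^*,\boldsymbol{\eta}^*$ of $\mathbf{Z}_1$ from $\mathbf{Z}_2$, test the candidate $(\mathbf{W}^*_1\mathbf{H}^*,\ \mathbf{b}^*_1+\mathbf{W}^*_1\boldsymbol{\eta}^*)$ in the $\mathbf{Z}_2$-regression, and conclude via the triangle inequality plus submultiplicativity of the Frobenius norm. Your remarks on attainment of the minimizers and the infimum fallback are a small tidiness bonus over the paper's version, but the argument is the same.
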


The proof can be found in Appendix \ref{approx Appendix}. From Lemma \ref{approx}, we know that the regression error of $\mathbf{Z}_1$ to $\mathbf{Z}_2$ is crucial for understanding the differences of representations. We further bound the regression error with rank and singular values in the following Lemma \ref{rank and singuar}.

\begin{lemma} \label{rank and singuar}
Suppose $\mathbf{Z}_1 = [\mathbf{z}^{(1)}_1 \cdots \mathbf{z}^{(1)}_N] \in \mathbb{R}^{d{'} \times N}$ and $\mathbf{Z}_2 = [\mathbf{z}^{(2)}_1 \cdots \mathbf{z}^{(2)}_N] \in \mathbb{R}^{d \times N}$ and $\text{rank}(\mathbf{Z}_1) > \text{rank}(\mathbf{Z}_2)$. Denote the singular value of $\frac{\mathbf{Z}_1}{\sqrt{N}}$ as $\sigma_1 \geq \cdots \geq \sigma_{N}$. Then $\min_{\mathbf{H}, \mathbf{\eta}} \frac{1}{N} \|\mathbf{Z}_1 - (\mathbf{H} \mathbf{Z}_2 + \mathbf{\eta} \mathbf{1}_N )\|^2_F \geq \sum^{\text{rank}(\mathbf{Z}_1)}_{j= \text{rank}(\mathbf{Z}_2)+2} (\sigma_j)^2$.  
\end{lemma}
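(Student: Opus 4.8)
The plan is to recast this affine regression problem as a low-rank matrix approximation problem and then invoke the Eckart--Young--Mirsky theorem. First I would observe that every candidate approximant has bounded rank: since $\operatorname{rank}(\mathbf{H}\mathbf{Z}_2) \le \operatorname{rank}(\mathbf{Z}_2)$ and the bias term $\mathbf{\eta}\mathbf{1}_N$ has rank at most one, the matrix $\mathbf{M} = \mathbf{H}\mathbf{Z}_2 + \mathbf{\eta}\mathbf{1}_N$ satisfies $\operatorname{rank}(\mathbf{M}) \le \operatorname{rank}(\mathbf{Z}_2)+1$. Thus as $(\mathbf{H},\mathbf{\eta})$ range over all admissible values, the feasible set $\{\mathbf{H}\mathbf{Z}_2 + \mathbf{\eta}\mathbf{1}_N\}$ is contained in the set of all $d'\times N$ matrices of rank at most $\operatorname{rank}(\mathbf{Z}_2)+1$.

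Next I would relax the minimization. Because the feasible set is contained in the larger rank-constrained set, minimizing the squared Frobenius error over the smaller set can only be larger:
$$
\min_{\mathbf{H},\mathbf{\eta}} \|\mathbf{Z}_1 - (\mathbf{H}\mathbf{Z}_2 + \mathbf{\eta}\mathbf{1}_N)\|_F^2 \ \geq\ \min_{\operatorname{rank}(\mathbf{M}) \le \operatorname{rank}(\mathbf{Z}_2)+1} \|\mathbf{Z}_1 - \mathbf{M}\|_F^2 .
$$
Then I would evaluate the right-hand side via Eckart--Young--Mirsky: the best rank-$k$ approximation of $\mathbf{Z}_1$ in Frobenius norm incurs squared error equal to the sum of the squares of the discarded singular values of $\mathbf{Z}_1$. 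Taking $k = \operatorname{rank}(\mathbf{Z}_2)+1$, which is strictly less than $\operatorname{rank}(\mathbf{Z}_1)$ by hypothesis so the tail is nonempty, the discarded singular values of $\mathbf{Z}_1$ are those indexed from $\operatorname{rank}(\mathbf{Z}_2)+2$ through $\operatorname{rank}(\mathbf{Z}_1)$.

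Finally I would divide by $N$ and rescale. Since the singular values of $\mathbf{Z}_1/\sqrt{N}$ are exactly $1/\sqrt{N}$ times those of $\mathbf{Z}_1$, the factor $\tfrac{1}{N}$ converts each squared singular value of $\mathbf{Z}_1$ into $\sigma_j^2$, yielding the claimed bound
$$
\min_{\mathbf{H},\mathbf{\eta}} \frac{1}{N}\|\mathbf{Z}_1 - (\mathbf{H}\mathbf{Z}_2 + \mathbf{\eta}\mathbf{1}_N)\|_F^2 \ \geq\ \sum_{j=\operatorname{rank}(\mathbf{Z}_2)+2}^{\operatorname{rank}(\mathbf{Z}_1)} \sigma_j^2 .
$$

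The main point requiring care is the bookkeeping of the ``$+1$'' contributed by the bias term $\mathbf{\eta}\mathbf{1}_N$, together with the index alignment in Eckart--Young--Mirsky, so that the summation begins precisely at $j = \operatorname{rank}(\mathbf{Z}_2)+2$ rather than $\operatorname{rank}(\mathbf{Z}_2)+1$. I would also flag that the strict hypothesis $\operatorname{rank}(\mathbf{Z}_1) > \operatorname{rank}(\mathbf{Z}_2)$ is exactly what keeps the bound nonvacuous; if it failed, the tail sum could be empty and the inequality would degenerate to the trivial $\geq 0$. Note that this argument only uses that $\mathbf{Z}_2$ enters linearly, so the same relaxation works regardless of the ambient dimension $d$.
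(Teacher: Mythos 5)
Your proof is correct and is essentially the same argument as the paper's: both bound the rank of any candidate approximant $\mathbf{H}\mathbf{Z}_2 + \mathbf{\eta}\mathbf{1}_N$ by $\operatorname{rank}(\mathbf{Z}_2)+1$ and then invoke the Eckart--Young--Mirsky theorem, with the tail sum truncated at $\operatorname{rank}(\mathbf{Z}_1)$ because later singular values vanish. The only cosmetic difference is that you relax the feasible set to all matrices of rank at most $\operatorname{rank}(\mathbf{Z}_2)+1$ before applying the theorem, whereas the paper applies it directly to the optimizer's rank; your write-up is, if anything, cleaner about the index bookkeeping.
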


The proof can be found in Appendix \ref{rank and singuar Appendix}. The bound given by Lemma \ref{rank and singuar} is not that straightforward to understand. Assuming the features are normalized, we successfully derived the connection of regression error and ratio of ranks in Theorem \ref{rank ratio}.

\begin{theorem} \label{rank ratio}
Suppose $\| \mathbf{z}^{(1)}_j \|_2=1$, where ($1 \leq j \leq N$). Then lower bound of approximation error can be upper-bounded as follows:
$\sum^{\text{rank}(\mathbf{Z}_1)}_{j= \text{rank}(\mathbf{Z}_2)+2} (\sigma_j)^2 \leq \frac{\text{rank}(\mathbf{Z}_1)-\text{rank}(\mathbf{Z}_2)-1}{\text{rank}(\mathbf{Z}_1)} \leq 1-\frac{\text{rank}(\mathbf{Z}_2)}{\text{rank}(\mathbf{Z}_1)}$.
\end{theorem}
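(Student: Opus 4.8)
The plan is to reduce everything to a single normalization identity together with an elementary averaging inequality for monotone sequences. Write $r_1 = \text{rank}(\mathbf{Z}_1)$ and $r_2 = \text{rank}(\mathbf{Z}_2)$, so that $\sigma_1 \geq \cdots \geq \sigma_{r_1} > 0$ are exactly the nonzero singular values of $\frac{\mathbf{Z}_1}{\sqrt{N}}$ while $\sigma_j = 0$ for $j > r_1$. First I would record the key consequence of the unit-norm hypothesis: since $\| \mathbf{z}^{(1)}_j \|_2 = 1$ for every $j$, we have $\|\mathbf{Z}_1\|_F^2 = \sum_{j=1}^N \| \mathbf{z}^{(1)}_j \|_2^2 = N$, and therefore
\[
\sum_{j=1}^{r_1} \sigma_j^2 = \left\| \frac{\mathbf{Z}_1}{\sqrt{N}} \right\|_F^2 = \frac{1}{N}\|\mathbf{Z}_1\|_F^2 = 1 .
\]

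Next I would observe that the quantity to be bounded is precisely the sum of the smallest $k := r_1 - r_2 - 1$ of the squared singular values: because the $\sigma_j$ are arranged in decreasing order, the indices $j = r_2+2, \ldots, r_1$ pick out the last (hence smallest) block, and there are exactly $r_1 - (r_2+2) + 1 = r_1 - r_2 - 1$ such terms, matching the stated numerator. The main step is then the elementary fact that, for a nonincreasing nonnegative sequence summing to $1$ over $r_1$ indices, the average of any bottom block does not exceed the global average $1/r_1$. Concretely, let $A_{\mathrm{bot}}$ denote the average of $\sigma_{r_2+2}^2, \ldots, \sigma_{r_1}^2$ and $A_{\mathrm{top}}$ the average of $\sigma_1^2, \ldots, \sigma_{r_2+1}^2$. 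Since every bottom term is at most every top term, $A_{\mathrm{bot}} \leq A_{\mathrm{top}}$; combining this with the decomposition $1 = k\, A_{\mathrm{bot}} + (r_1 - k)\, A_{\mathrm{top}}$ forces $A_{\mathrm{bot}} \leq 1/r_1$, whence
\[
\sum_{j=r_2+2}^{r_1} \sigma_j^2 = k\, A_{\mathrm{bot}} \leq \frac{k}{r_1} = \frac{r_1 - r_2 - 1}{r_1} ,
\]
which is the first inequality.

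Finally, the second inequality is immediate arithmetic: $\frac{r_1 - r_2 - 1}{r_1} \leq \frac{r_1 - r_2}{r_1} = 1 - \frac{r_2}{r_1}$, since dropping the $-1$ in the numerator only increases the fraction. I would also dispose of the degenerate case $r_1 = r_2 + 1$ for completeness: there the summation range is empty, the left-hand side equals $0$, and the bound $\frac{r_1 - r_2 - 1}{r_1} = 0$ holds with equality.

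I do not anticipate a serious obstacle here; the only places demanding care are making the averaging argument rigorous rather than merely asserting ``the tail lies below average,'' and confirming the term count so that $k = r_1 - r_2 - 1$ aligns exactly with the claimed numerator. Everything else is a direct consequence of the Frobenius-norm normalization $\sum_j \sigma_j^2 = 1$ supplied by the unit-norm assumption.
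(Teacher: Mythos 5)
Your proof is correct and takes essentially the same approach as the paper's (one-line) proof: the unit-norm hypothesis gives $\sum_{j} \sigma_j^2 = 1$, and the decreasing ordering of the singular values forces the bottom block of $\text{rank}(\mathbf{Z}_1)-\text{rank}(\mathbf{Z}_2)-1$ terms to lie below the global average $1/\text{rank}(\mathbf{Z}_1)$. You merely make rigorous the averaging step that the paper asserts as ``direct,'' and the check of the empty-sum case is a harmless addition.
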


The proof can be found in Appendix \ref{rank ratio Appendix}. From \citep{wei2024large, zhang2023matrix}, $\exp{(\operatorname{H}(\mathbf{G}(\mathbf{Z}))}$ is a good approximate of $\text{rank}(\mathbf{Z})$. Then we can see that $\frac{\text{rank}(\mathbf{Z}_2)}{\text{rank}(\mathbf{Z}_1)} \approx \exp{(\operatorname{H}(\mathbf{G}(\mathbf{Z}_2)) - \operatorname{H}(\mathbf{G}(\mathbf{Z}_1)))}$, making the difference of entropy a good surrogate bound for approximation error.

\section{Information Interplay in Supervised Learning}

Inspired by matrix information theory and Neural Collapse theory, we focus more on the consistency between sample representations and class classification heads. We determine the relationships among samples by constructing a similarity matrix of the representations of dataset samples. According to NC1 and NC3, the similarity matrix between samples approximates the similarity matrix of the corresponding class centers, which is also the similarity matrix of the corresponding weights in the fully connected layer. Therefore, under Neural Collapse, the similarity relationship among samples is equivalent to the similarity relationship of the corresponding category weights in the fully connected layer. Our analysis, grounded in matrix information theory, primarily concentrates on the relationship between the representations of samples and the weights in the fully connected layer.
Due to constraints in computational resources, we approximate the dataset's matrix entropy using batch matrix entropy.

Our models are trained on CIFAR-10 and CIFAR-100. The default experimental configuration comprise training the models with an SGD optimizer (momentum of 0.9, weight decay of $5e^{-4}$), an initial learning rate of 0.03 with cosine annealing, a batch size of 64, and a total of $2^{20}$ training iterations. The backbone architecture is WideResNet-28-2 for CIFAR-10 and WideResNet-28-8 for CIFAR-100. 

\subsection{Information interplay during standard supervised learning process }

\begin{figure}[b]
    \centering
    \begin{subfigure}{0.49\linewidth}
        \centering
        \includegraphics[width=\linewidth, height=0.9\linewidth]{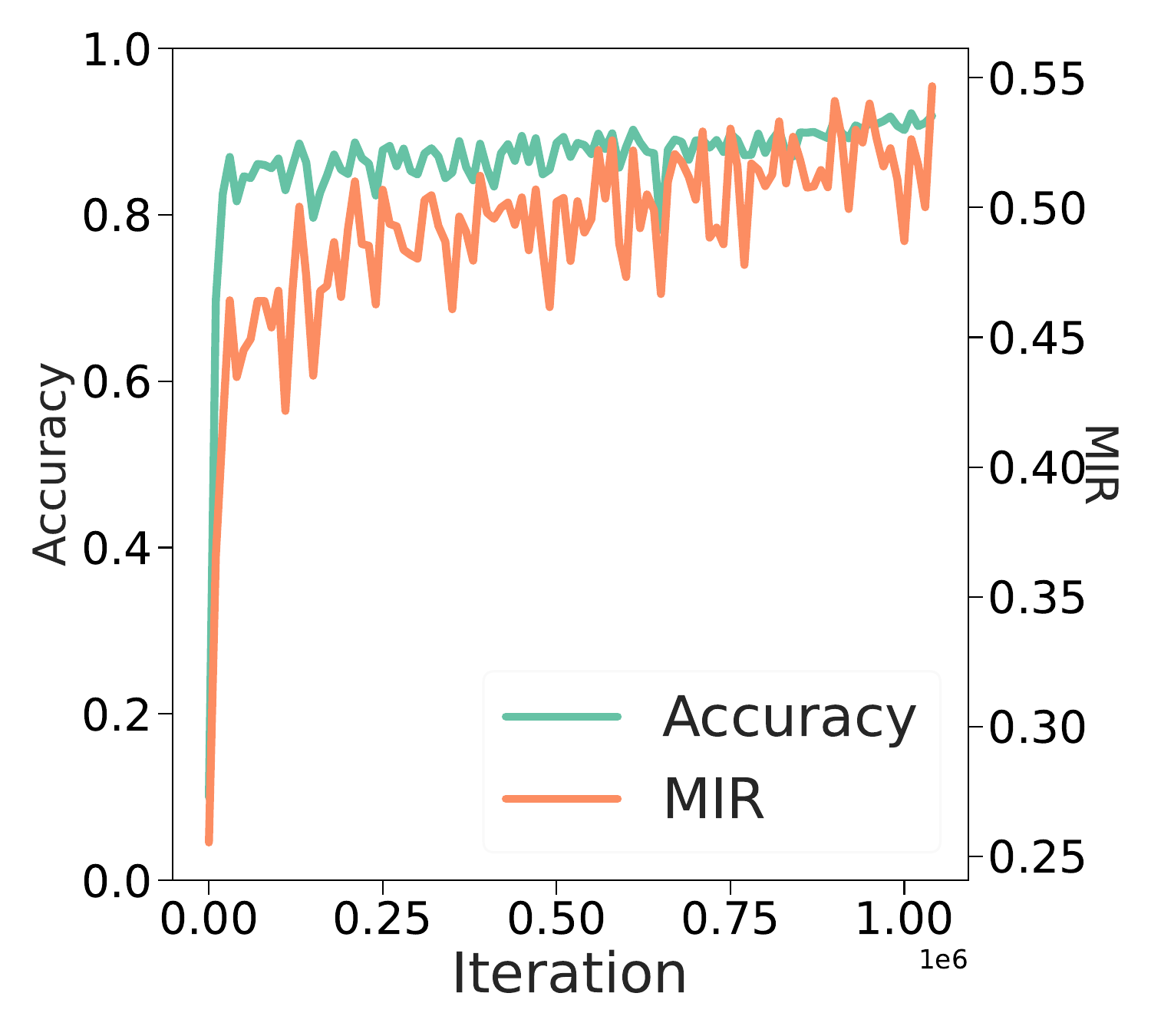}
        
        \caption{CIFAR-10}
        \label{fig:tp_MIR_10}
    \end{subfigure}
    \begin{subfigure}{0.49\linewidth}
        \centering
        \includegraphics[width=\linewidth, height=0.9\linewidth]{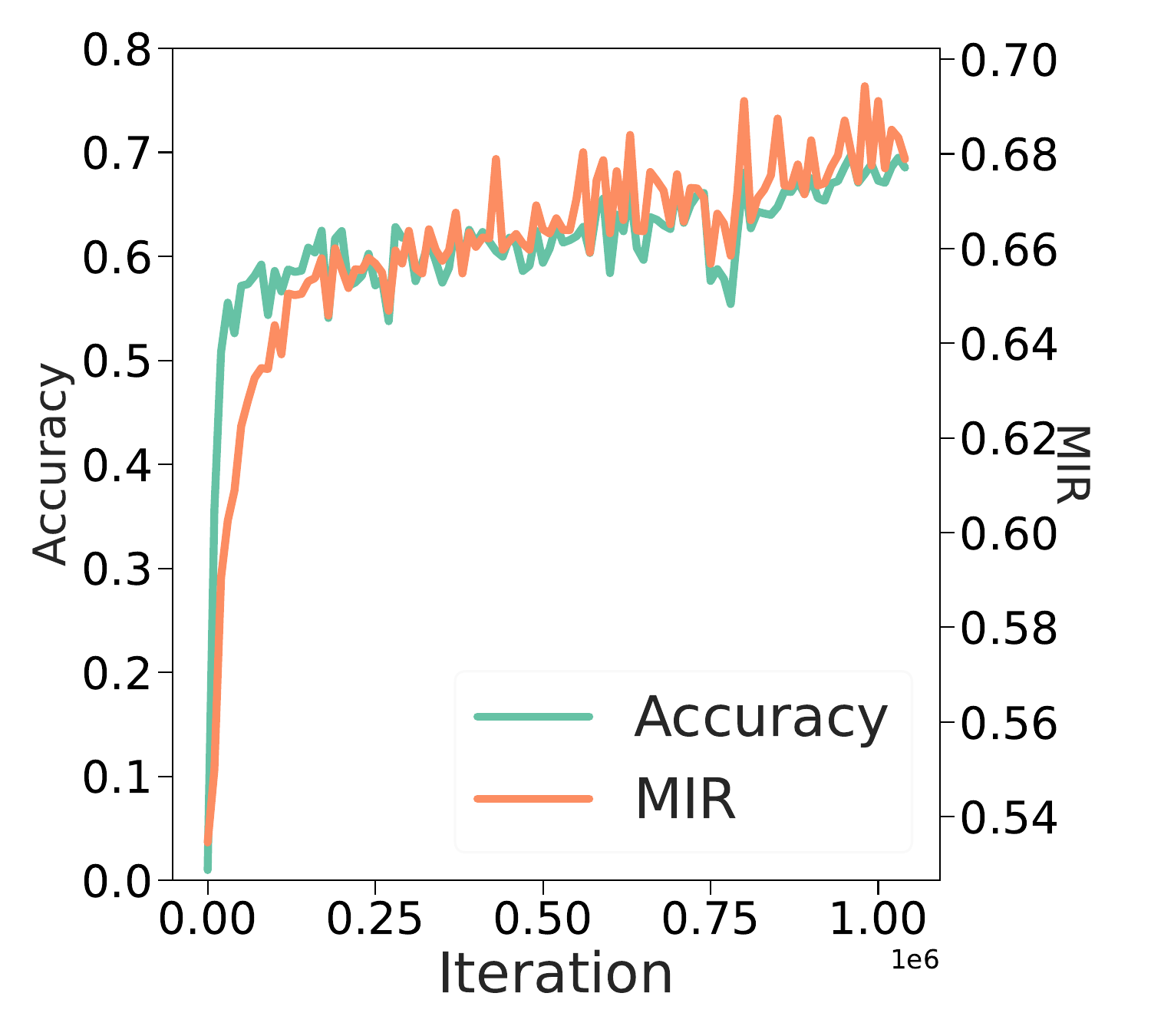}
        \caption{CIFAR-100}
        \label{fig:tp_MIR_100}
    \end{subfigure}
    \caption{Accuracy and MIR on the test set during training.}
    \label{fig:tp_MIR}
\end{figure}

According to Neural Collapse, during the terminal stages of training, sample features align with the weights of the fully connected layer. Theorem \ref{direct NC} indicates that during the training process, MIR increases to its theoretical upper limit, while HDR decreases to 0. We plot the model's accuracy on the test set during the training process, as well as the MIR and the HDR between data representations and the corresponding classification heads. As shown in Figure \ref{fig:tp_MIR}, on CIFAR-10 and CIFAR-100, the accuracy and MIR exhibit almost identical trends of variations. In most cases, both accuracy and MIR increase or decrease simultaneously, and MIR consistently shows an upward trend, having its trajectory toward its theoretical maximum value. As shown in Figure \ref{fig:tp_HDR}, during the training process, in most instances, accuracy and HDR show opposite trends, with HDR continually decreasing and even nearing its theoretical minimum value of 0 on CIFAR-100. In summary, MIR and HDR effectively describe the process of training towards Neural Collapse.

\begin{figure}[h]
    \centering
    \begin{subfigure}{0.49\linewidth}
        \centering
        \includegraphics[width=\linewidth]{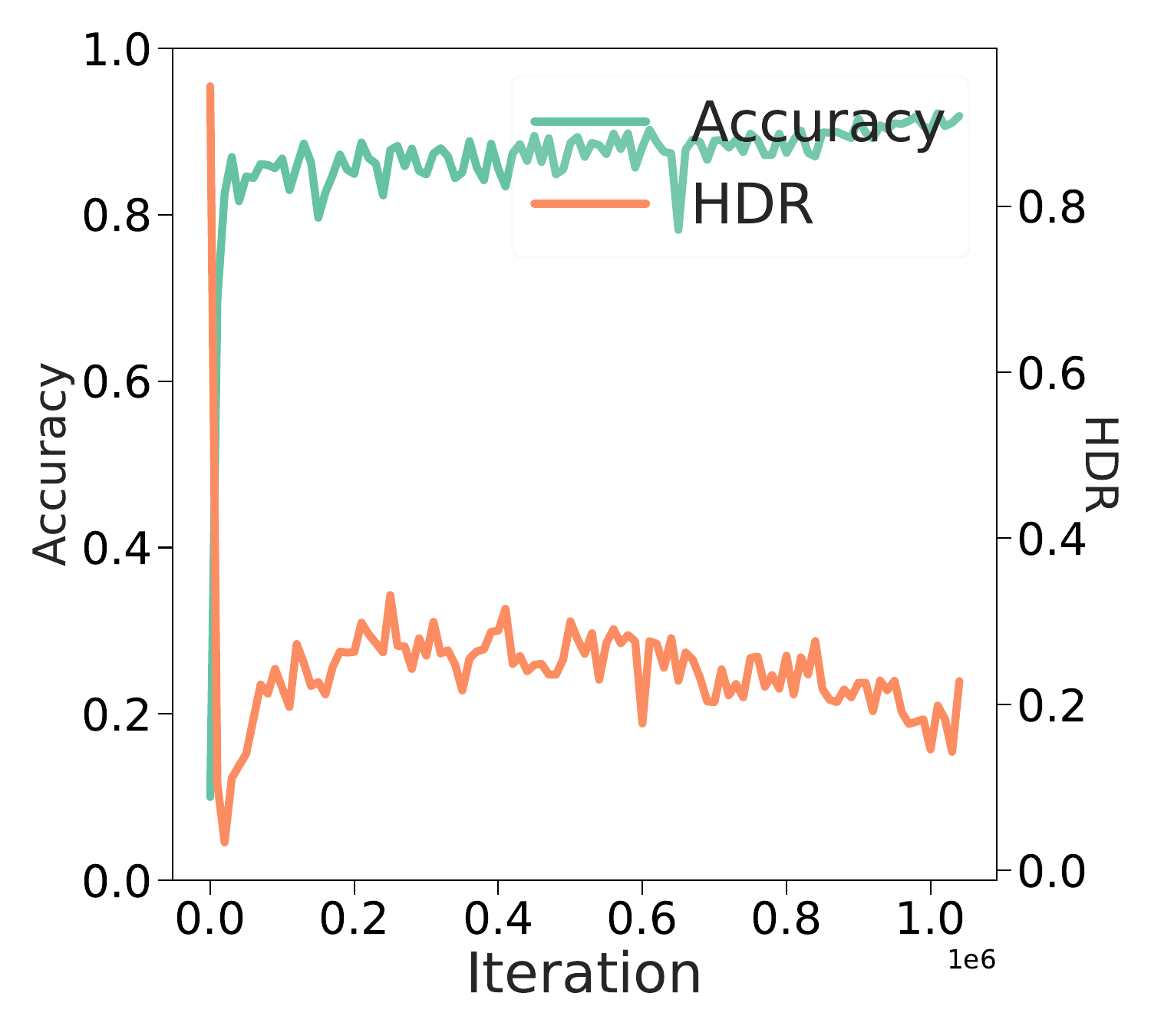}
        \caption{CIFAR-10}
        \label{fig:tp_HDR_10}
    \end{subfigure}
    \begin{subfigure}{0.49\linewidth}
        \centering
        \includegraphics[width=\linewidth]{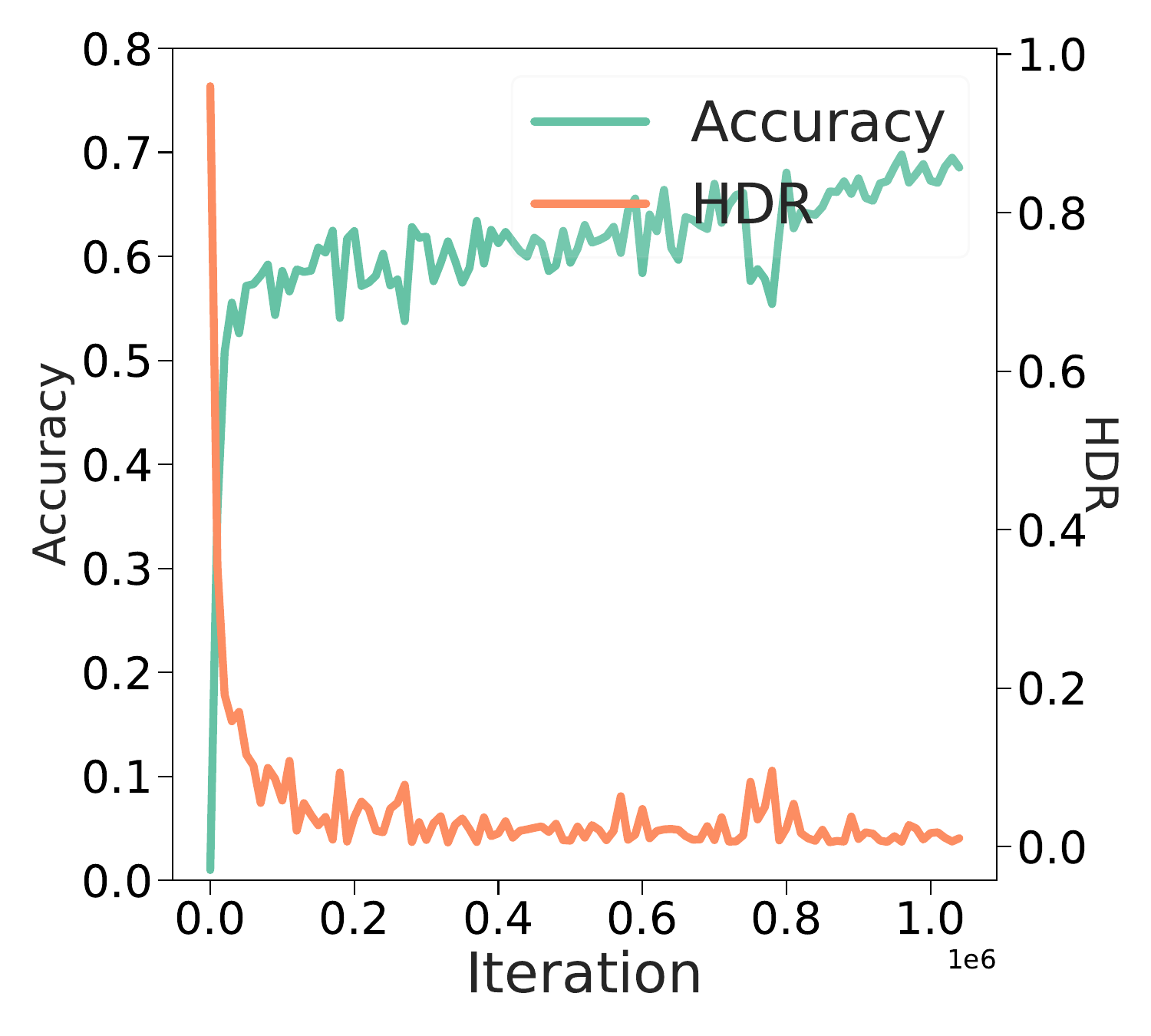}
        \caption{CIFAR-100}
        \label{fig:tp_HDR_100}
    \end{subfigure}
    \caption{Accuracy and HDR on the test set during training.}
    \label{fig:tp_HDR}
\end{figure}

\begin{figure}[b]
    \centering
    \begin{subfigure}{0.49\linewidth}
        \centering
        \includegraphics[width=\linewidth]{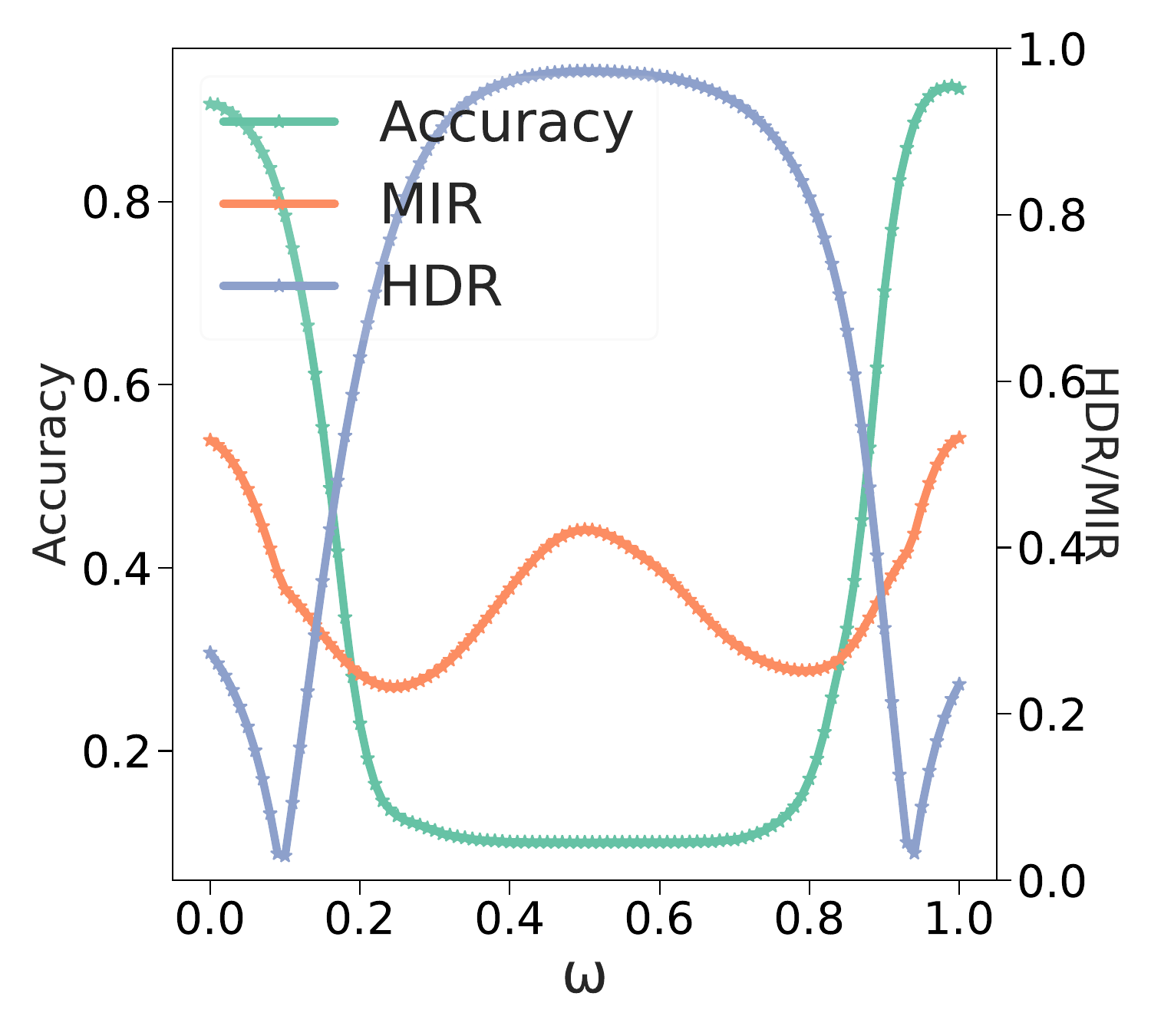}
        \caption{CIFAR-10}
        \label{fig:lc_HDRMIR_10}
    \end{subfigure}
    \begin{subfigure}{0.49\linewidth}
        \centering
        \includegraphics[width=\linewidth]{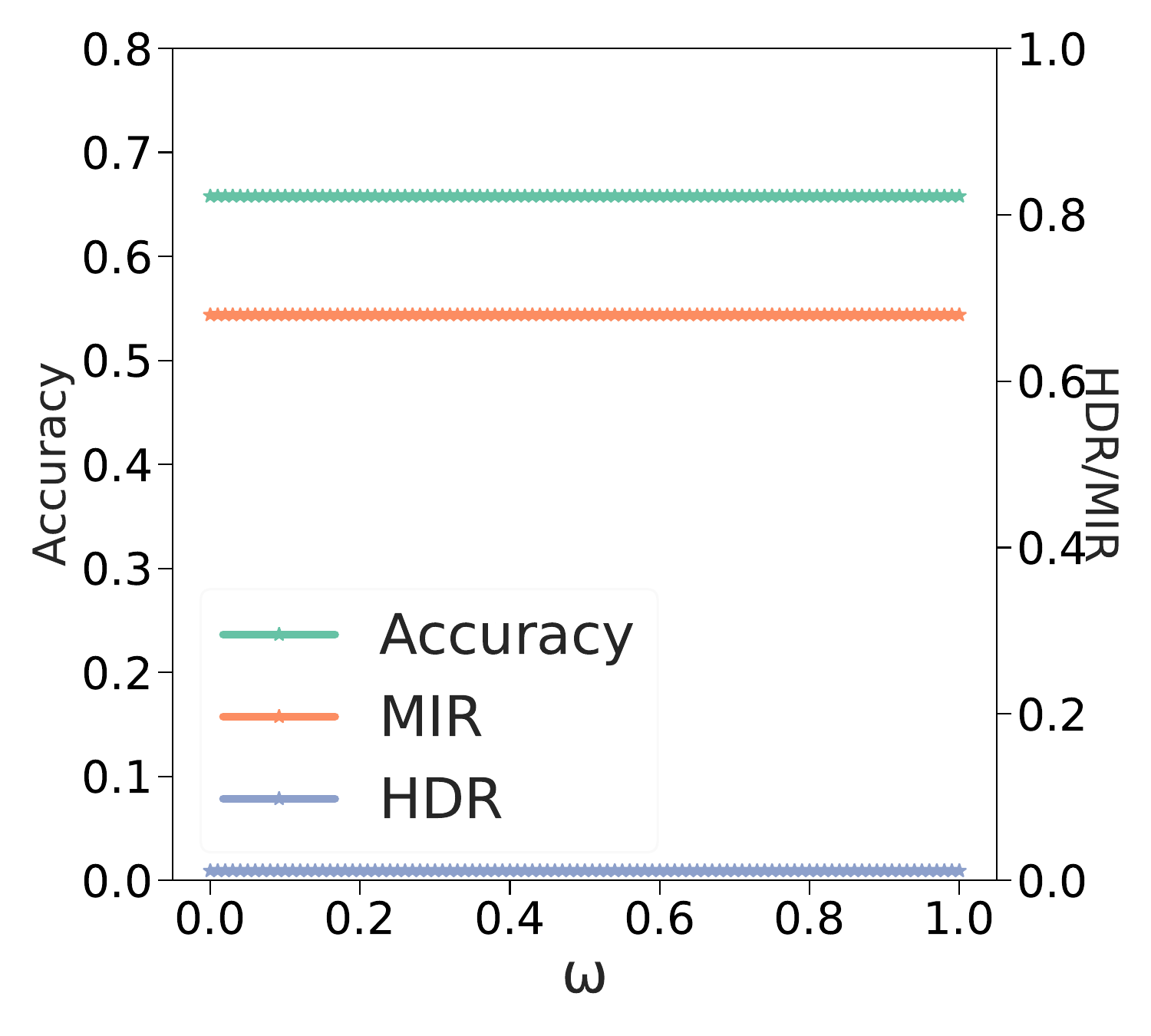}
        \caption{CIFAR-100}
        \label{fig:lc_HDRMIR_100}
    \end{subfigure}
    \caption{Accuracy, MIR, and HDR of models interpolated with different weights on the test set.}
    \label{fig:lc_HDRMIR}
\end{figure}

\begin{figure}[h]
    \centering
    \begin{subfigure}{0.325\linewidth}
        \centering
        \includegraphics[width=\linewidth]{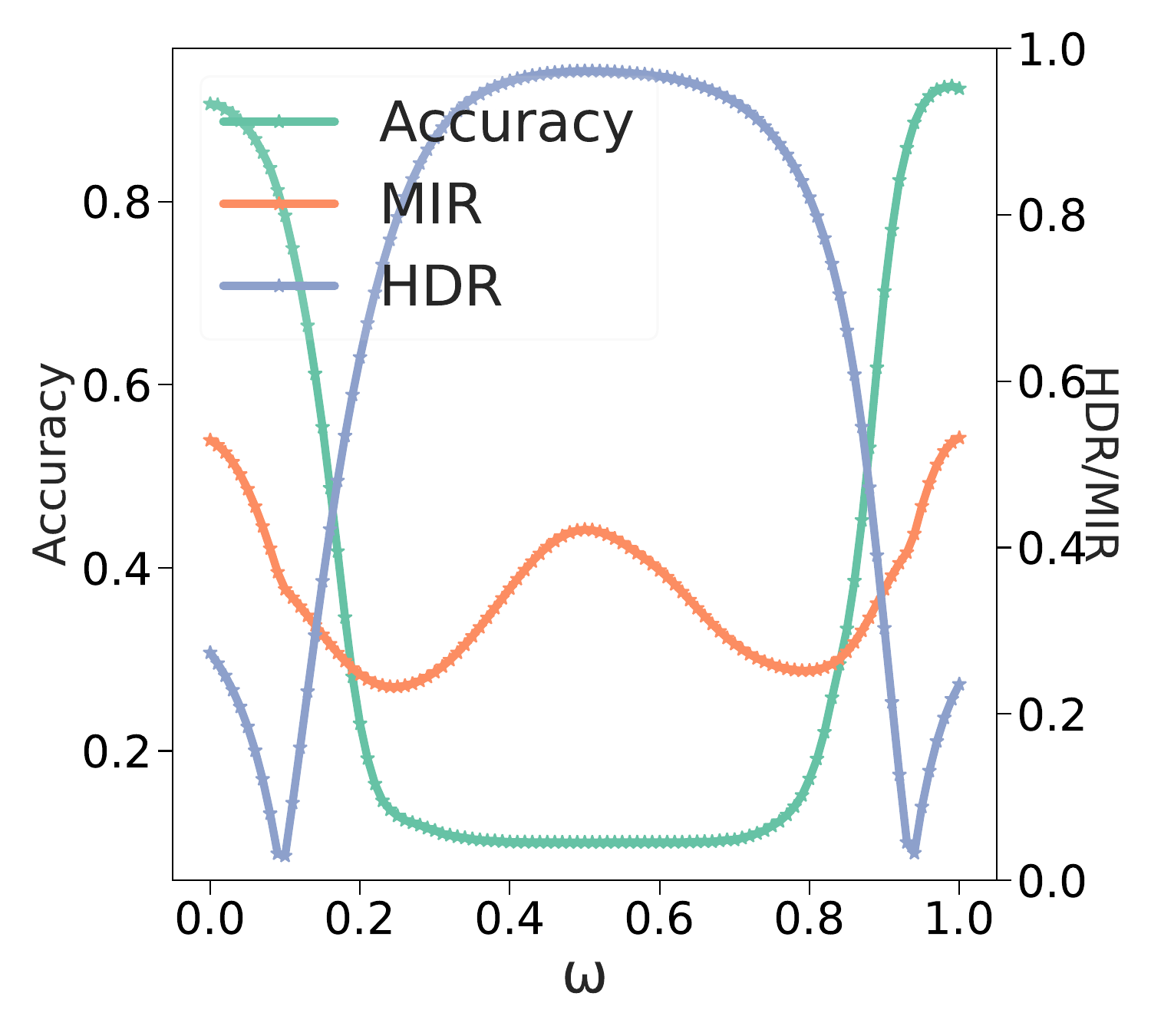}
        \caption{lr: $3e^{-2}$}
        \label{fig:lc_lr2}
    \end{subfigure}
    \begin{subfigure}{0.325\linewidth}
        \centering
        \includegraphics[width=\linewidth]{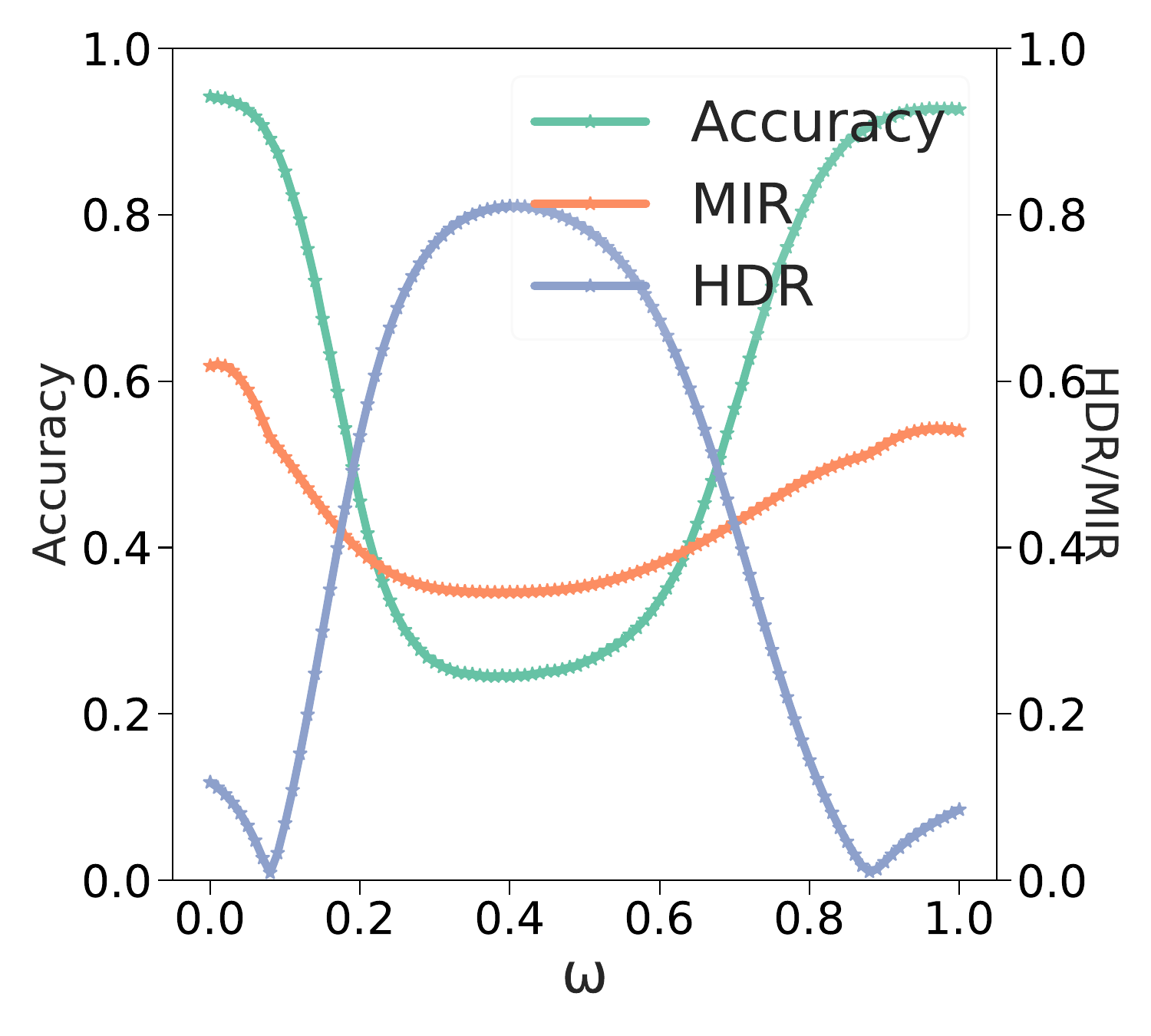}
        \caption{lr: $3e^{-3}$}
        \label{fig:lc_lr3}
    \end{subfigure}
    \begin{subfigure}{0.325\linewidth}
        \centering
        \includegraphics[width=\linewidth]{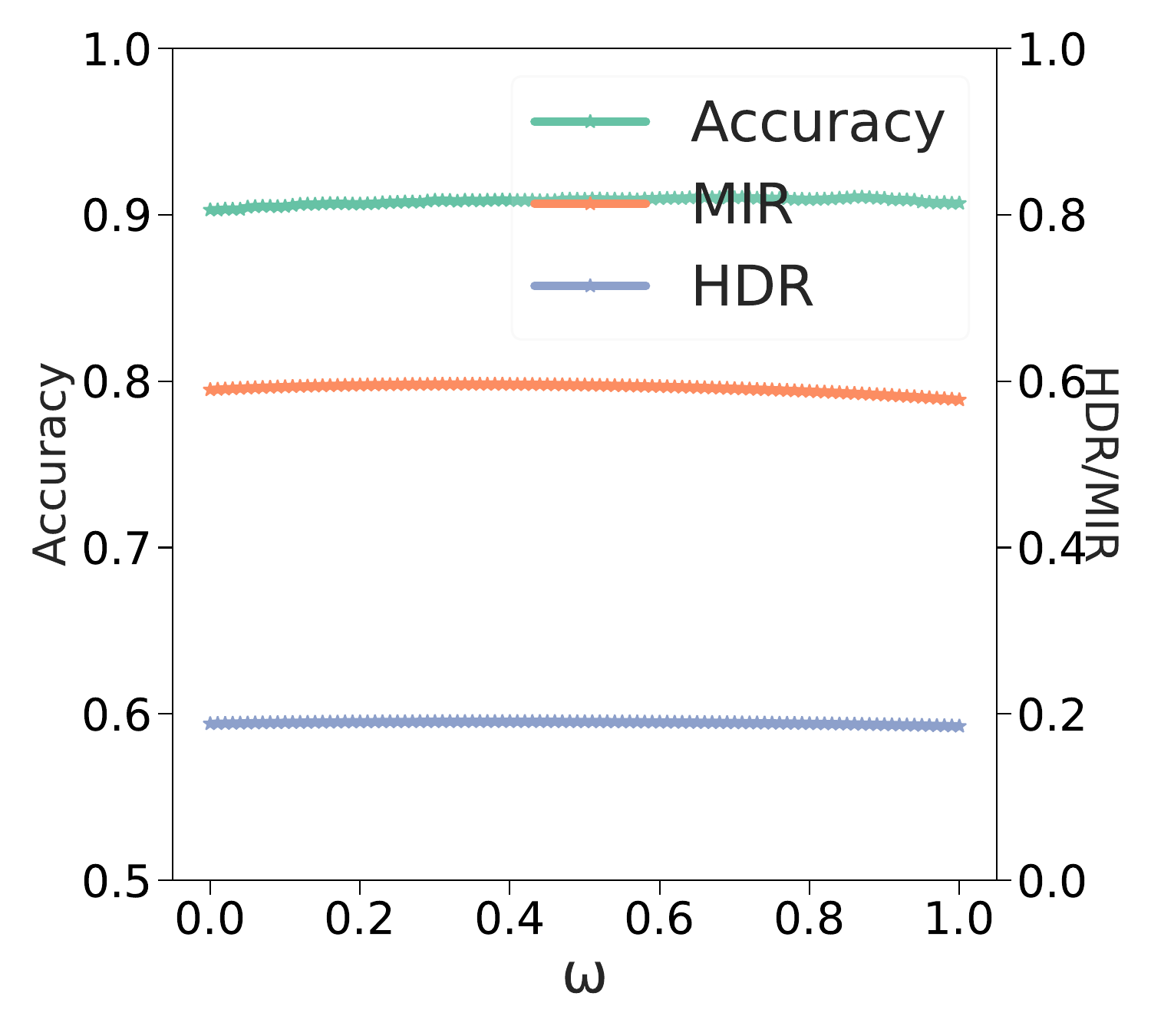}
        \caption{lr: $3e^{-4}$}
        \label{fig:lc_lr4}
    \end{subfigure}
    \caption{Accuracy, MIR, and HDR of models interpolated with different weights on CIFAR-10 test set.}
    \label{fig:lc_lr}
\end{figure}

\subsection{{Information interplay in linear mode connectivity}}
Linear mode connectivity \citep{frankle2020linear} suggests that under specific datasets and experimental setups, models initialized with the same random parameters will be optimized near the same local optimal basin, even if the order of training data and data augmentation differs. We investigate the behaviors of MIR and HDR under the setting of linear mode connectivity. We initialize models with the same random parameters and train them using different data sequences and random augmentations. Subsequently, we linearly interpolate these two checkpoints and obtain a new model $h=(1-\omega)\cdot h_1 + \omega \cdot h_2$, where $h_1$ and $h_2$ are the two ckeckpoints and $\omega$ is the interpolation weight. Then we test these models on a test set for accuracy, MIR, and HDR.

We conduct experiments on CIFAR-10 and CIFAR-100. As shown in Figure \ref{fig:lc_HDRMIR_10} and \ref{fig:lc_HDRMIR_100}. On CIFAR-100, the performance of models obtained along the interpolation line is close, aligning with the linear mode connectivity. At this point, MIR and HDR remain almost unchanged. However, on CIFAR-10, the models do not exhibit linear mode connectivity. When the value of interpolation weight is between 0.4 and 0.6, the performance of the interpolated models even drop to that of random guessing. Surprisingly, at this time, MIR shows an additional upward trend. Moreover, when the value of interpolation weight is close to 0 and 1, despite a slight decrease in performance, HDR also decreases. Although we find it difficult to explain this anomaly, it does demonstrate that HDR and MIR have distinctive attributes compared to the accuracy metric, presenting an intriguing avenue for further exploration.

\citet{altintacs2023disentangling} point that linear mode connectivity is related to the experimental configuration. Therefore, we posit that the performance decline of the interpolated model on CIFAR-10 is associated with an excessively high learning rate. In the training phase, models navigate the loss landscapes in search of minimal values, and two models with linear mode connectivity are optimized near the same local optimum. When the learning rate is too high, different training sample ordering and data augmentations lead to directing model optimization towards distinct regions within the loss landscapes. We experiment with different learning rates on CIFAR-10 and test their linear mode connectivity. It is observed that as the learning rate decreased, fluctuations in accuracy, MIR, and HDR also reduced. When the learning rate is lowered to $3e^{-4}$, the model demonstrate linear mode connectivity on CIFAR-10. This suggests that HDR and MIR are also effective in describing linear mode connectivity when it exists.

\subsection{Information interplay in label smoothing}

\begin{figure}[b]
    \centering
    \begin{subfigure}{0.49\linewidth}
        \centering
        \includegraphics[width=\linewidth]{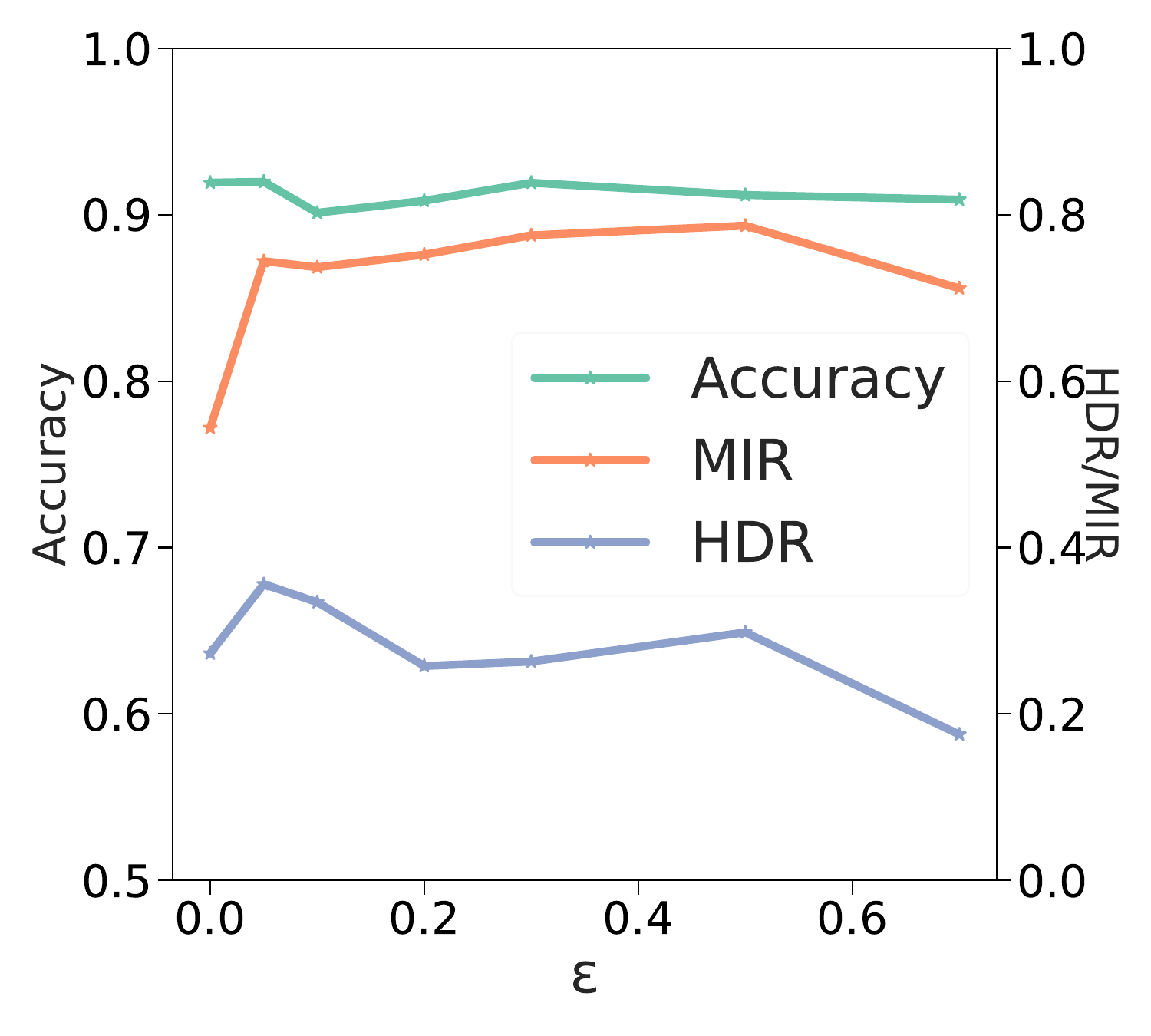}
        \caption{CIFAR-10}
        \label{fig:ls_HDRMIR_10}
    \end{subfigure}
    \begin{subfigure}{0.49\linewidth}
        \centering
        \includegraphics[width=\linewidth]{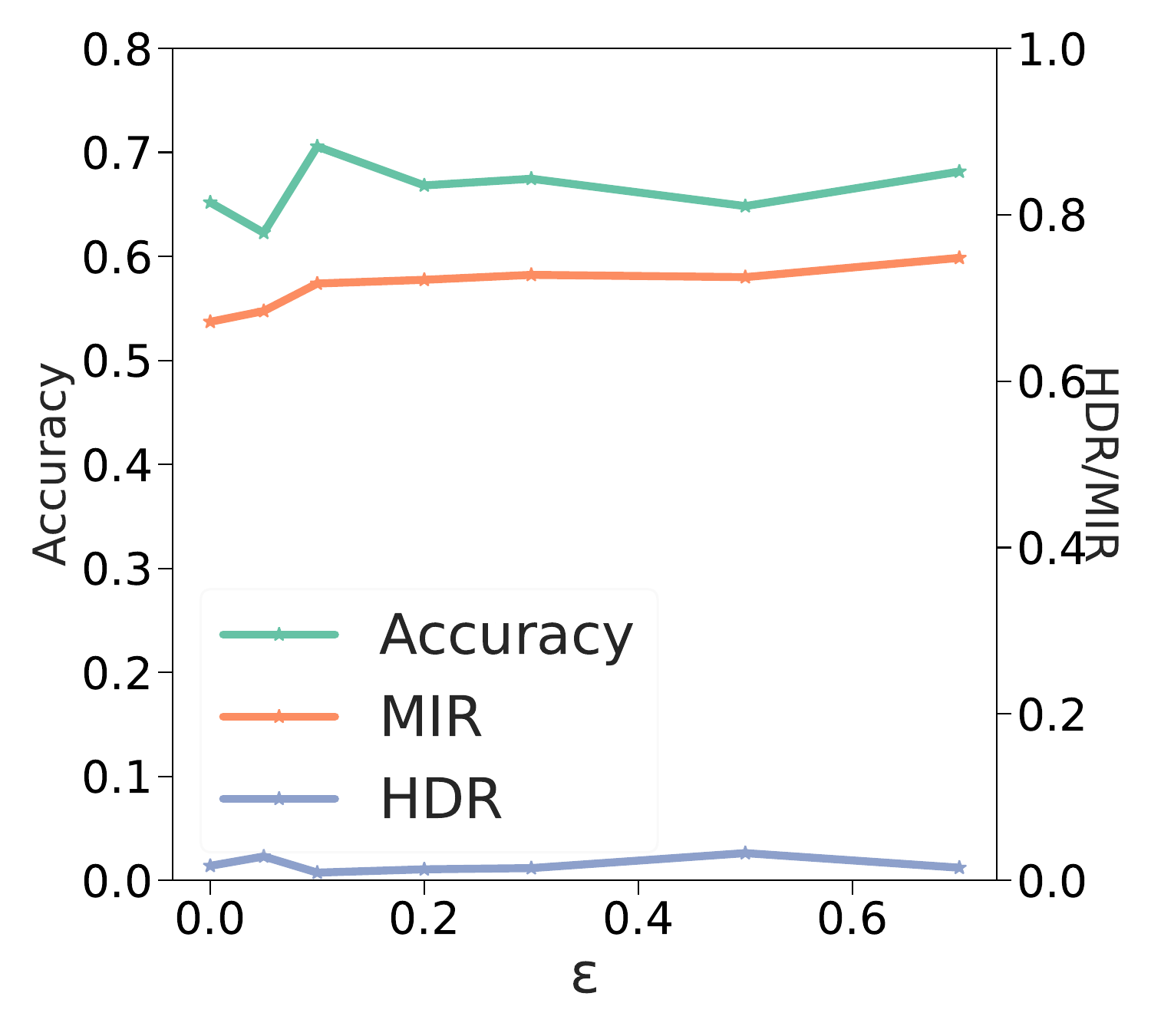}
        \caption{CIFAR-100}
        \label{fig:ls_HDRMIR_100}
    \end{subfigure}
    \caption{Accuracy, MIR, and HDR under different smoothness levels.}
    \label{fig:ls_HDRMIR}
\end{figure}

Label smoothing \cite{szegedy2016rethinking} is a widely used technique in deep learning. It improves the generalization of the model by setting smoothness of labels. $y'=(1-\epsilon)\cdot y + \frac{\epsilon}{C}$, where $\epsilon$ is the smoothness and $y$ is the one-hot label. We train models with various smoothness levels to explore their impact on accuracy, HDR, and MIR. As shown in Figure \ref{fig:ls_HDRMIR}, the variation in accuracy, MIR, and HDR are minimal, indicating that HDR and MIR can effectively describe the performance of label smoothing technique.

\begin{figure}[h]
    \centering
    \begin{subfigure}{0.49\linewidth}
        \centering
        \includegraphics[width=\linewidth]{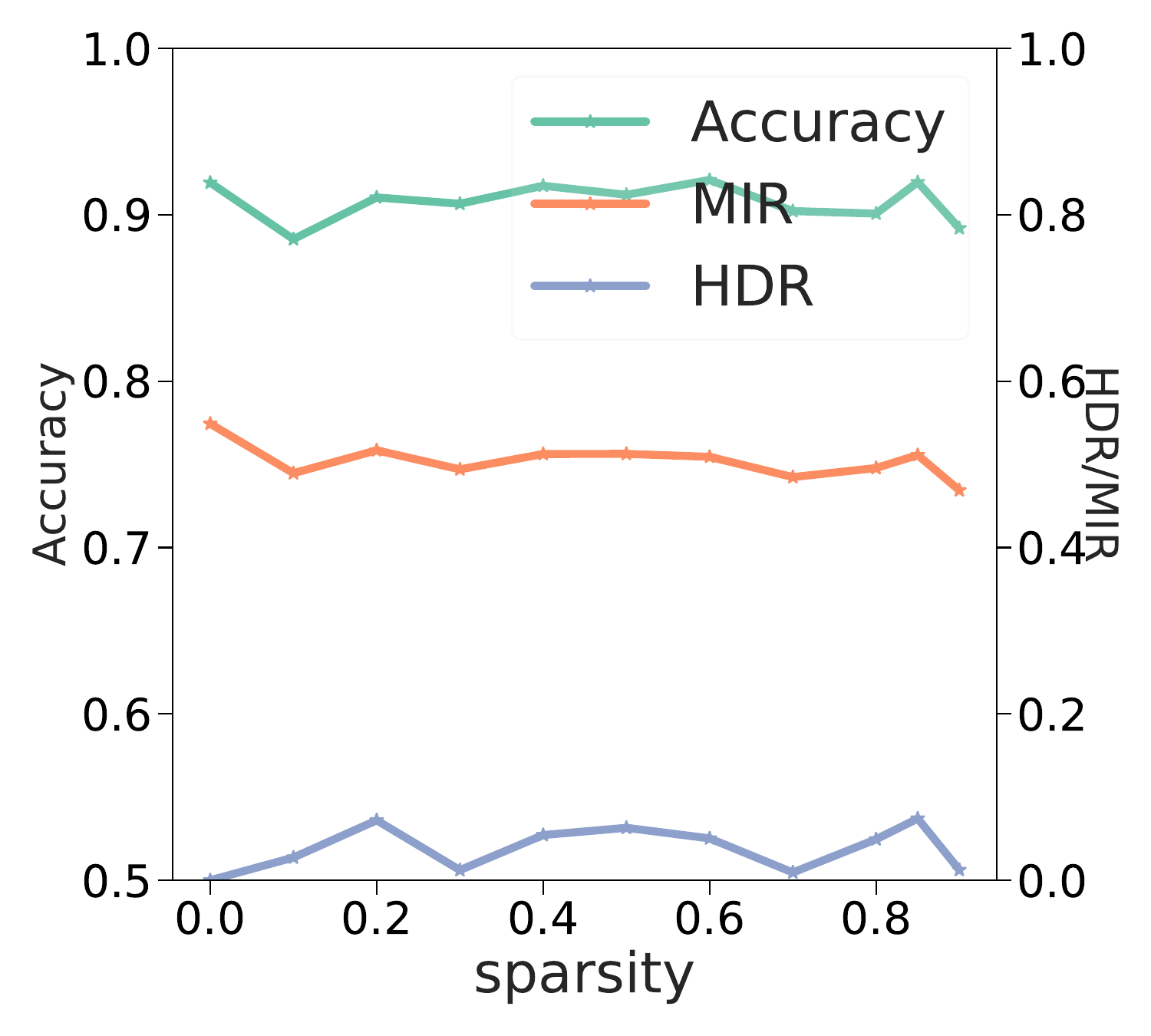}
        \caption{CIFAR-10}
        \label{fig:prune_HDRMIR_10}
    \end{subfigure}
    \begin{subfigure}{0.49\linewidth}
        \centering
        \includegraphics[width=\linewidth]{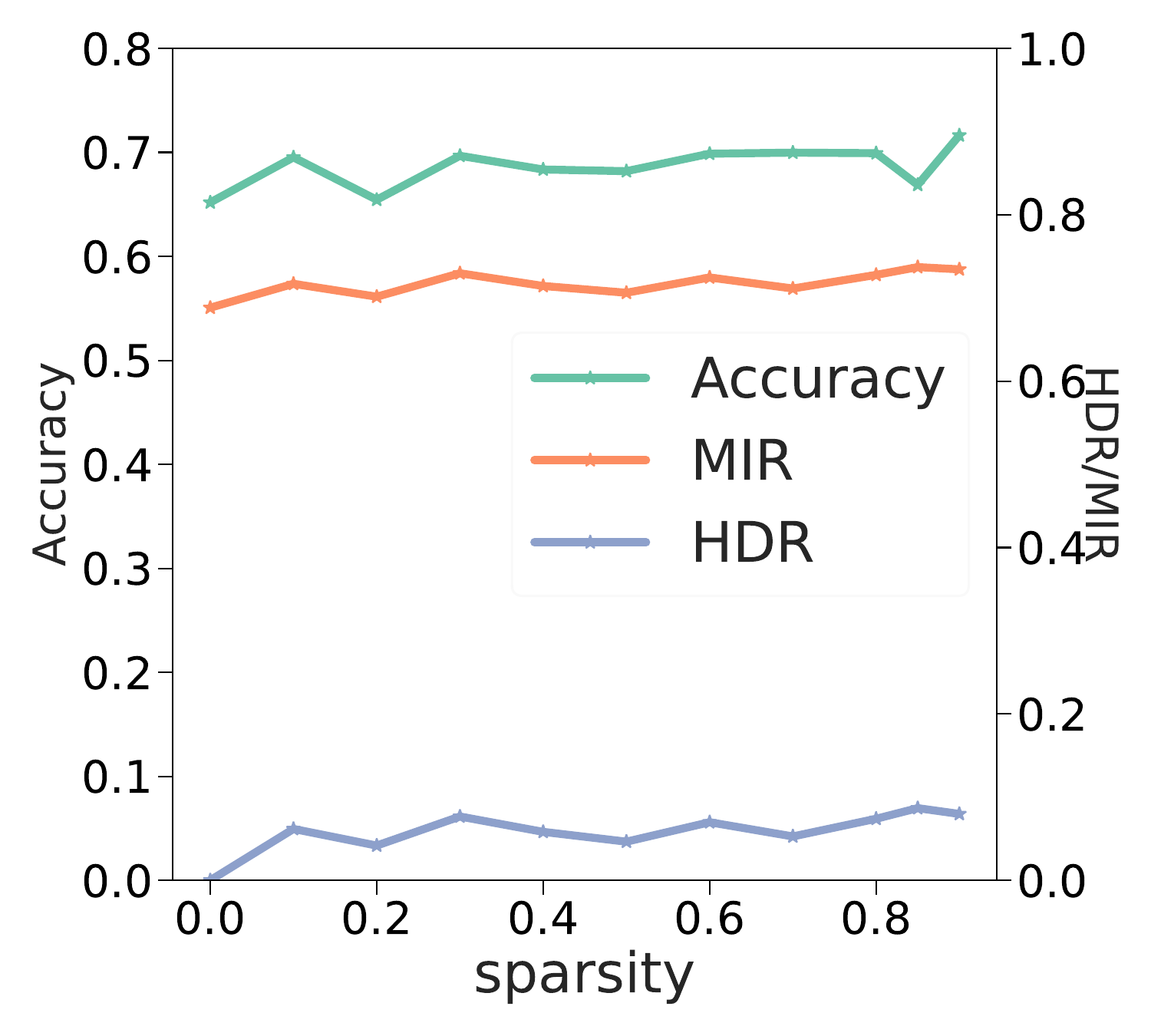}
        \caption{CIFAR-100}
        \label{fig:prune_HDRMIR_100}
    \end{subfigure}
    \caption{Accuracy, MIR, and HDR of the features extracted by the model before and after pruning.}
    \label{fig:prune_HDRMIR}
    \vspace{-20pt}
\end{figure}

\subsection{{Information interplay in model pruning}}

We would like to use MIR and HDR to understand why the pruning technique is effective in maintaining relatively high accuracy. We apply standard unstructured pruning to models: for a well-trained model, we determine the number of parameters to prune, denoted as $k$. The model's parameters are sorted in descending order based on their absolute values, and the smallest $k$ parameters are removed. Subsequently, the remaining parameters are fine-tuned again on the dataset \cite{han2015learning}.

\begin{figure}[b]
    \centering
    \includegraphics[width=0.49\linewidth, trim=0 0 0 0, clip]{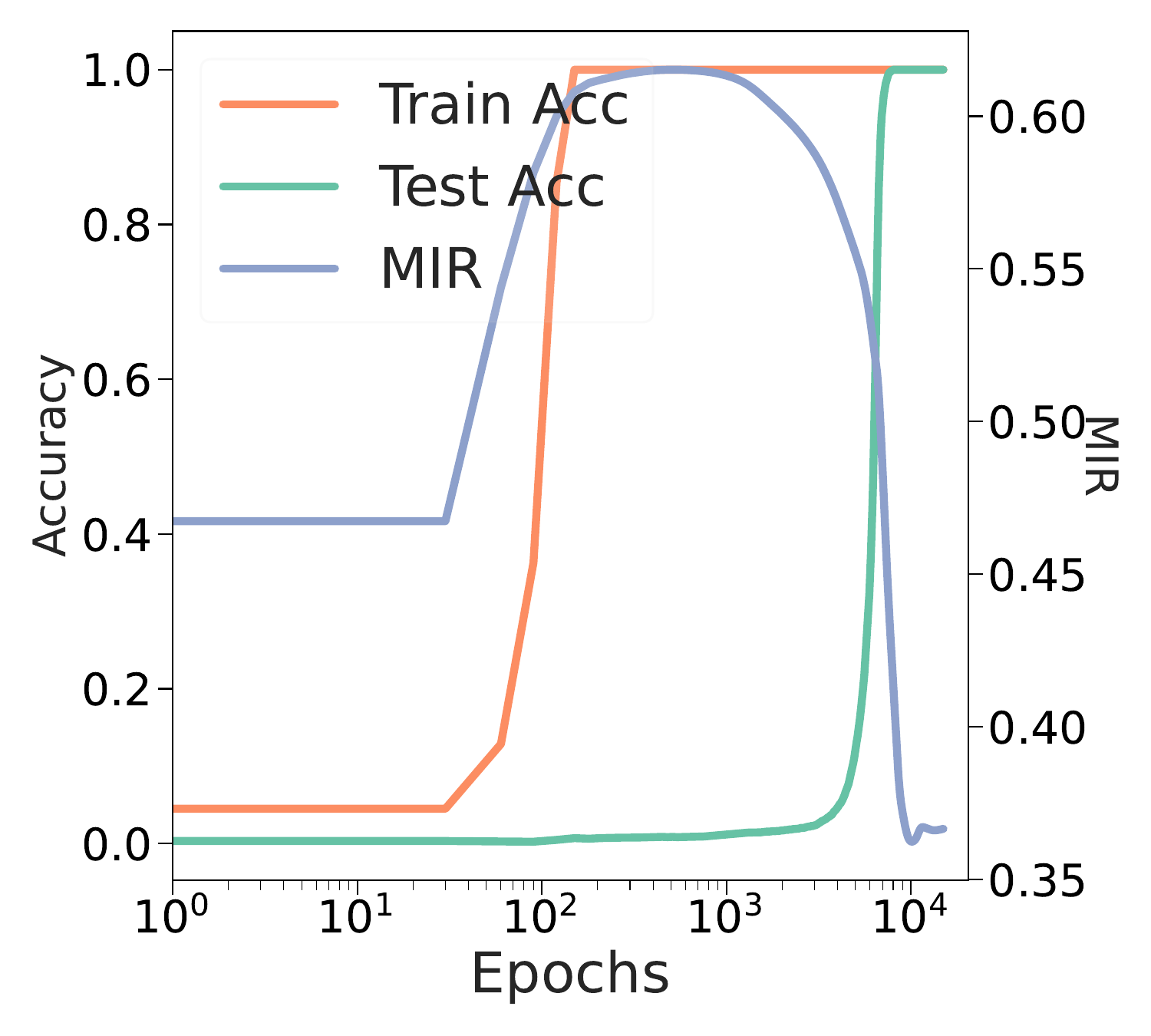}
    \includegraphics[width=0.49\linewidth, trim=0 0 0 0, clip]{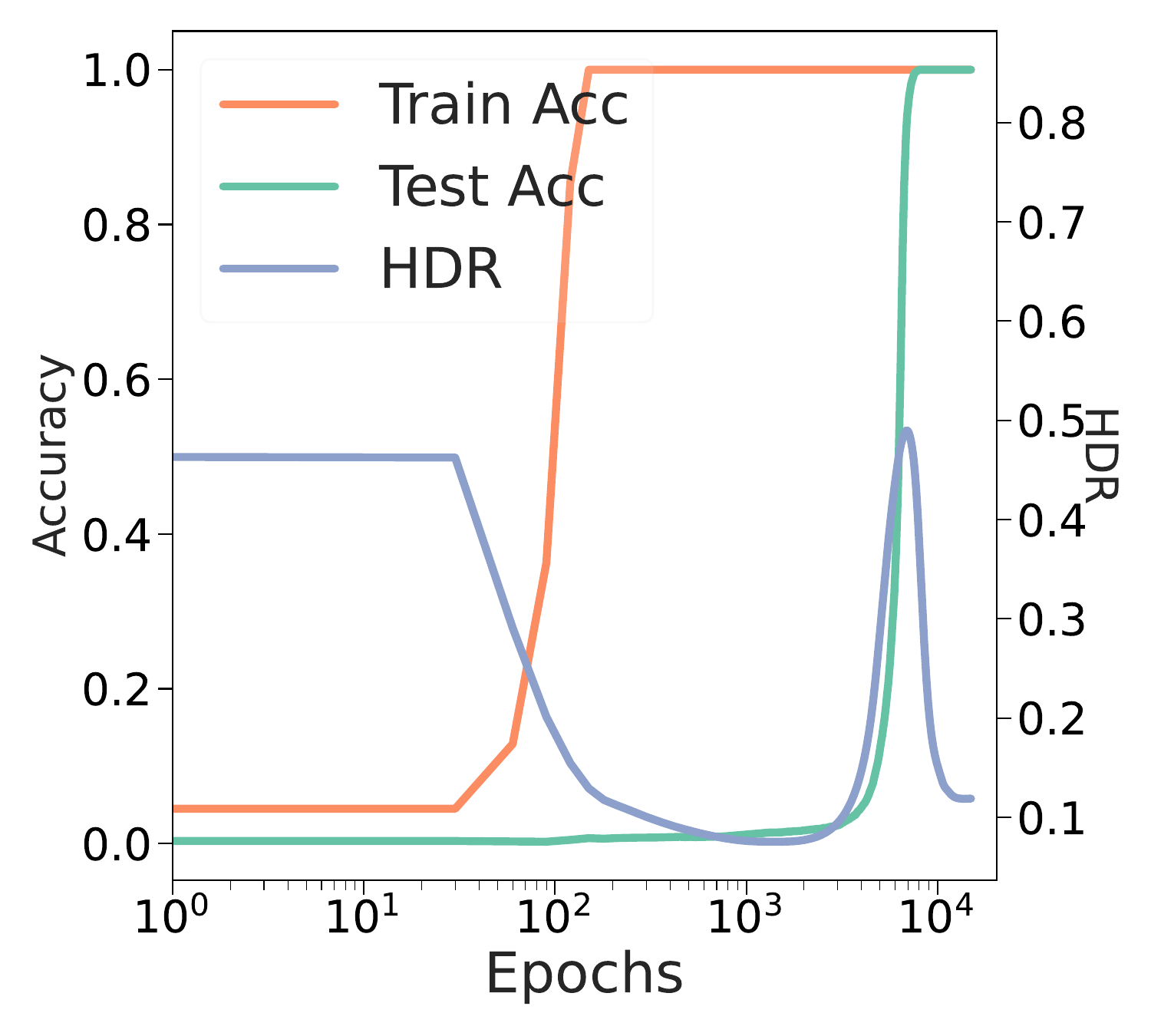}
    \caption{Accuracy, MIR, and HDR during Grokking.}
    \label{fig:grokking}
    
\end{figure}

We prune the model under various sparsity levels and extract features using the model before and after pruning. We calculate the MIR and HDR of the features extracted by the model before and after pruning. As shown in Figure \ref{fig:prune_HDRMIR}, even when the model sparsity is 90\%, the features extracted by the pruned model maintain a high MIR with those extracted before pruning.  At various sparsity ratio, the variations in MIR and HDR for models after pruning are little compared to these before pruning, the performance differences of the models before and after pruning are also not significant. This indicates that fine-tuning the pruned subnetwork can restore adequate information extraction capabilities.

\subsection{{Information interplay in grokking}}

In supervised learning, training models on certain datasets can result in an anomalous situation. Initially, models quickly learn the patterns of the training set, but at this point, their performance on the test set is very poor. As training continues, the models learn representations that can generalize to the test set, a phenomenon referred to as Grokking\cite{nanda2022progress}. We aim to explore the information interplay in Grokking. Following \citet{nanda2022progress, tan2023understanding}, we train a transformer to learn modular addition $ c \equiv (a + b)\pmod{p} $, with $p$ being 113. The model input is ``$ a~b = $'', where $a$ and $b$ are encoded into $p$-dimensional one-hot vectors, and ``$=$'' is used to signify the output value $c$. Our model employ a single-layer ReLU transformer with a token encoding dimension of 128 to learn positional encodings, four attention heads each of dimension 32, and an MLP with a hidden layer of dimension 512. We train the model using full-batch gradient descent, a learning rate of 0.001, and an AdamW optimizer with a weight decay parameter of 1. We use 30\% of all possible inputs ($113 \times 113$ pairs) for training data and test performance on the remaining 70\%.

As shown in Figure \ref{fig:grokking}, we plot the accuracy of both the training and test sets during the grokking process, as well as the variation in MIR and HDR between the representation and the fully connected layer. It can be observed that, in the early stages of training, the model quickly fits the training data and achieves 100\% accuracy on the training set. However, at this point, the performance on the test set is nearly equivalent to that of random guessing. As training continues, the model gradually shows generalization capability on the test set, ultimately achieving 100\% accuracy, which is a hallmark of grokking. Figure \ref{fig:grokking} also reveals a clear two-phase variation in both MIR and HDR between data representation and the weight of fully connected layer. Initially, similar to fully supervised learning, MIR increases, while HDR decreases. However, as training proceeds, MIR begins to decrease, and HDR starts to increase, indicating the model is seeking new optimal points. After the model achieves the grokking, MIR reaches its lowest, and HDR rapidly declines from its highest point. The experiments demonstrate that HDR and MIR exhibit distinct phenomena in two stages, suggesting that information metrics can describe the grokking phenomenon, providing a basis for further research.

\section{Improving Supervised and Semi-Supervised Learning With Information Interplay}

\subsection{Pipeline of supervised and semi-supervised learning}
In this section, we introduce how to apply matrix information entropy in supervised and semi-supervised learning. In supervised learning, we train the nerual network $h$ and classifier $\mathbf{W} \in \mathbb{R}^{C\times d}$ on the dataset $\mathcal{D}_L=\{(x_i,~y_i)\}_{i=0}^{N_L}$ consisting of $N_L$ samples. $h$ is used to extract data features $f \in \mathbb{R}^D$, and $\mathbf{W}$ classifies the extracted features. The model is optimized using the following cross-entropy loss.

\begin{equation*}
    \mathcal{L}_s=\frac{1}{B}\sum_{i=1}^B\mathcal{H}(y_i, p(\omega(x_i))),
\end{equation*}
where $B$ represents the batch size, $\mathcal{H}$ denotes the cross-entropy loss, $p(\cdot)$ refers to the model's output probability of a sample, and $\omega$ means random data augmentation.

Compared to supervised learning, semi-supervised learning includes an additional unlabeled dataset $\mathcal{D}_U=\{u_i\}_{i=0}^{N_U}$ which contain $N_U$ unlabeled data and utilizes it to assist in optimizing the model. In the processing of unlabeled data, we adopt the approach outlined in Freematch \cite{wang2022freematch}. This involves generating pseudo-labels through weak data augmentation and selecting data based on a probability threshold. The model is then employed to extract features from strongly augmented data for the computation of cross-entropy loss in conjunction with the pseudo-labels. The formulaic representation of the training objective for unlabeled data is as follows:

\begin{equation*}
    \mathcal{L}_u=\frac{1}{\mu B}\sum_{i=1}^{\mu B}\mathbb{I}\left(max(q_i) > \tau\right)\cdot \mathcal{H}\left(\hat{q_i},Q_i\right),
\end{equation*}

where $q_i$ and $Q_i$ correspond to $p(y|\omega(u_i))$ and $p(y|\Omega(u_i))$, respectively. The term $\hat{q_i}$ refers to one-hot pseudo-labels generated from $q_i$. The symbol $\mathbb{I}(\cdot > \tau)$ denotes the indicator function applied to values surpassing the threshold $\tau$. Furthermore, $\omega$ and $\Omega$ are used to distinguish between weak and strong data augmentation.

In addition, Freematch incorporates a fairness objective to predict each class with uniform frequency. 

\begin{equation*}
    \mathcal{L}_f=-H\left(\text{SumNorm}\left(\frac{p_1}{hist_1}\right), \text{SumNorm}\left(\frac{p_2}{hist_2}\right)\right).
\end{equation*}
$\text{SumNorm}=(\cdot)/\sum(\cdot) $. $p_1$ and $p_2$ refer to the average predictions of the model under weak and strong augmentation, respectively. Likewise, $hist_1$ and $hist_2$ indicate the histogram distributions resulting from weak and strong augmentation, respectively.

The overall objective is 
\begin{equation*}
    \mathcal{L}_{ssl}=\mathcal{L}_s+\lambda_u\mathcal{L}_u+\lambda_f\mathcal{L}_f,
\end{equation*}
where $\lambda_u$ and $\lambda_f$ represent the weight for $\mathcal{L}_u$ and $\mathcal{L}_f$.

\subsection{Insights from information interplay}

In a batch of labeled data $\{(x_i, y_i)\}_{i=1}^{B} \in \mathcal{D}_L$, $h$ extracts feature representations, denoted as $f \in \mathbb{R}^{B\times D}$. In Neural Collapse theory, the representation of each sample's class center aligns with the classifier weight of the respective category, i.e., $V_{i}=W_{y_i}$. In the case of unlabeled data $\{u_i\}_{i=1}^{\mu B} \in \mathcal{D}_U$, we select sample features $f'$ from $\mu B$ samples with pseudo-label probabilities greater than $\tau$. i.e., $f'=\{f_i \in f | \mathbb{I}\left(max(q_j) > \tau\right)\},$ and obtain the corresponding class centers $V'=W_{y_i'}$, where $y_i'$ is the pseudo label of $f'$.

\textbf{Maximizing mutual information.}
As shown in Figure \ref{fig:tp_MIR}, during the model training process, the mutual information between a batch's data features $f$ and the corresponding class weights $V$ increases. Therefore, we add an additional loss term to increase the mutual information between them. For supervised learning, the final optimization objective is 
\begin{equation*}
    \mathcal{L} = \mathcal{L}_{s} - \lambda_{mi}\cdot\text{MI}\left(\mathbf{G}(f),\mathbf{G}(V)\right).
\end{equation*}
For semi-supervised learning, the final optimization objective is 
\begin{equation*}
    \mathcal{L} = \mathcal{L}_{ssl} - \lambda_{mi}\cdot\text{MI}\left(\mathbf{G}(f'),\mathbf{G}(V')\right),
\end{equation*}

where $\lambda_{mi}$ is the weight for the mutual information.

\textbf{Minimizing entropy difference.} As depicted in Figure \ref{fig:tp_HDR}, throughout the training phase, the disparity in information entropy between a batch's data features $f$ and the associated category weights $V$ diminishes in tandem with an increase in accuracy. Consequently, it is feasible to introduce an auxiliary loss component within the training regime to further mitigate this entropy discrepancy. In the context of supervised learning, the ultimate optimization target is delineated as 
\begin{equation*}
    \mathcal{L} = \mathcal{L}_{s} + \lambda_{id}\cdot\left|\text{H}(\mathbf{G}(f))-\text{H}(\mathbf{G}(V))\right|.
\end{equation*}
Regarding semi-supervised learning, this target shifts to 
\begin{equation*}
    \mathcal{L} = \mathcal{L}_{ssl} + \lambda_{id}\cdot\left|\text{H}(\mathbf{G}(f'))-\text{H}(\mathbf{G}(V'))\right|,
\end{equation*}
wherein $\lambda_{id}$ signifies the weight for entropy difference.

\subsection{Performances on supervised and semi-supervised learning}

\begin{table*}[t]
\centering
\caption{Error rates (100\% - accuracy) on CIFAR-10/100, and STL-10 datasets for state-of-the-art methods in semi-supervised learning. Bold indicates the best performance, and underline indicates the second best.}
\resizebox{\textwidth}{!}{%
\begin{tabular}{l|ccc|cc|ccc}
\toprule
Dataset & \multicolumn{3}{c|}{CIFAR-10} & \multicolumn{2}{c|}{CIFAR-100}& \multicolumn{2}{c}{STL-10} \\ 
\cmidrule{1-1}\cmidrule(lr){2-4}\cmidrule(lr){5-6}\cmidrule{7-8} 
\# Label & 10 & 40 & 250 & 400   &2500& 40 & 1000\\ 
\cmidrule{1-1}\cmidrule(lr){2-4}\cmidrule(lr){5-6}\cmidrule{7-8}
$\Pi$ Model \cite{rasmus2015semi} & 
79.18{\scriptsize $\pm$1.11} &
74.34{\scriptsize $\pm$1.76} & 46.24{\scriptsize $\pm$1.29} & 86.96{\scriptsize $\pm$0.80}  & 58.80{\scriptsize $\pm$0.66} & 74.31{\scriptsize $\pm$0.85} & 32.78{\scriptsize $\pm$0.40} \\
Pseudo Label \cite{lee2013pseudo} & 80.21{\scriptsize $\pm$ 0.55} & 74.61{\scriptsize $\pm$0.26} & 46.49{\scriptsize $\pm$2.20} & 87.45{\scriptsize $\pm$0.85}  & 57.74{\scriptsize $\pm$0.28} & 74.68{\scriptsize $\pm$0.99} & 32.64{\scriptsize $\pm$0.71} \\
VAT \cite{miyato2018virtual} & 79.81{\scriptsize $\pm$ 1.17} & 74.66{\scriptsize $\pm$2.12} & 41.03{\scriptsize $\pm$1.79} & 85.20{\scriptsize $\pm$1.40}  & 48.84{\scriptsize $\pm$0.79} & 74.74{\scriptsize $\pm$0.38} & 37.95{\scriptsize $\pm$1.12} \\
MeanTeacher \cite{tarvainen2017mean} & 76.37{\scriptsize $\pm$ 0.44} & 70.09{\scriptsize $\pm$1.60} & 37.46{\scriptsize $\pm$3.30} & 81.11{\scriptsize $\pm$1.44}  & 45.17{\scriptsize $\pm$1.06} & 71.72{\scriptsize $\pm$1.45} & 33.90{\scriptsize $\pm$1.37} \\
MixMatch \cite{berthelot2019mixmatch} & 65.76{\scriptsize $\pm$ 7.06} & 36.19{\scriptsize $\pm$6.48} & 13.63{\scriptsize $\pm$0.59} & 67.59{\scriptsize $\pm$0.66}  & 39.76{\scriptsize $\pm$0.48} & 54.93{\scriptsize $\pm$0.96} & 21.70{\scriptsize $\pm$0.68} \\
ReMixMatch  \cite{berthelot2019remixmatch} & 20.77{\scriptsize $\pm$ 7.48} & 9.88{\scriptsize $\pm$1.03} & 6.30{\scriptsize $\pm$0.05} & 42.75{\scriptsize $\pm$1.05}  & 26.03{\scriptsize $\pm$0.35} & 32.12{\scriptsize $\pm$6.24} & 6.74{\scriptsize $\pm$0.17}\\
UDA \cite{xie2020unsupervised} & 34.53{\scriptsize $\pm$ 10.69} & 10.62{\scriptsize $\pm$3.75} & 5.16{\scriptsize $\pm$0.06} & 46.39{\scriptsize $\pm$1.59}  & 27.73{\scriptsize $\pm$0.21} & 37.42{\scriptsize $\pm$8.44} & 6.64{\scriptsize $\pm$0.17} \\
FixMatch \cite{sohn2020fixmatch} & 24.79{\scriptsize $\pm$ 7.65} & 7.47{\scriptsize $\pm$0.28} & 5.07{\scriptsize $\pm$0.05} & 46.42{\scriptsize $\pm$0.82}  & 28.03{\scriptsize $\pm$0.16} & 35.97{\scriptsize $\pm$4.14} & 6.25{\scriptsize $\pm$0.33} \\
Dash \cite{xu2021dash} & 27.28{\scriptsize $\pm$ 14.09} & 8.93{\scriptsize $\pm$3.11} & 5.16{\scriptsize $\pm$0.23} & 44.82{\scriptsize $\pm$0.96}  & 27.15{\scriptsize $\pm$0.22} & 34.52{\scriptsize $\pm$4.30} & 6.39{\scriptsize $\pm$0.56} \\
MPL \cite{pham2021meta} & 23.55{\scriptsize $\pm$ 6.01} & 6.93{\scriptsize $\pm$0.17} & 5.76{\scriptsize $\pm$0.24} & 46.26{\scriptsize $\pm$1.84}  & 27.71{\scriptsize $\pm$0.19} & 35.76{\scriptsize $\pm$4.83} & 6.66{\scriptsize $\pm$0.00} \\

FlexMatch \cite{zhang2021flexmatch} & 13.85{\scriptsize $\pm$ 12.04} & 4.97{\scriptsize $\pm$0.06} & 4.98{\scriptsize $\pm$0.09} & 39.94{\scriptsize $\pm$1.62}  & 26.49{\scriptsize $\pm$0.20} & 29.15{\scriptsize $\pm$4.16} & 5.77{\scriptsize $\pm$0.18} \\
FreeMatch \cite{wang2022freematch} & {8.07{\scriptsize $\pm$ 4.24}} & {4.90{\scriptsize $\pm$0.04}} & {4.88{\scriptsize $\pm$0.18}} & {37.98{\scriptsize $\pm$0.42}}  & 26.47{\scriptsize $\pm$0.20} & {15.56{\scriptsize $\pm$0.55}} & {5.63{\scriptsize $\pm$0.15}} \\
OTMatch \cite{tan2023otmatch} & {4.89{\scriptsize $\pm$ 0.76}} & {4.72{\scriptsize $\pm$0.08}} & {4.60{\scriptsize $\pm$0.15}} & {37.29{\scriptsize $\pm$0.76}}  & 26.04{\scriptsize $\pm$0.21} & \textbf{12.10{\scriptsize $\pm$0.72}} & {5.60{\scriptsize $\pm$0.14}} \\

SoftMatch \cite{chen2023softmatch} & {4.91{\scriptsize $\pm$ 0.12}} & {4.82{\scriptsize $\pm$0.09}} & \textbf{4.04{\scriptsize $\pm$0.02}} & {37.10{\scriptsize $\pm$0.07}}  & 26.66{\scriptsize $\pm$0.25} & {21.42{\scriptsize $\pm$3.48}} & {5.73{\scriptsize $\pm$0.24}} \\

\cmidrule{1-1}\cmidrule(lr){2-4}\cmidrule(lr){5-6}\cmidrule{7-8}

FreeMatch + Maximizing Mutual Information (Ours) & \underline{4.87{\scriptsize $\pm$ 0.66}} & \underline{4.66{\scriptsize $\pm$ 0.13}} & \underline{4.56{\scriptsize $\pm$ 0.15}} & \textbf{36.41{\scriptsize $\pm$ 1.91}}   & \textbf{25.77{\scriptsize $\pm$ 0.35}} & {16.61{\scriptsize $\pm$ 1.19}} & \textbf{{5.24 \scriptsize $\pm$ 0.17}} \\

FreeMatch + Minimizing Entropy Difference (Ours) & \textbf{4.69{\scriptsize $\pm$ 0.16}} & \textbf{4.63{\scriptsize $\pm$ 0.25}} & {4.60{\scriptsize $\pm$ 0.15}} & \underline{37.31{\scriptsize $\pm$ 1.96}}   & \underline{25.79{\scriptsize $\pm$ 0.41}} & \underline{14.93 {\scriptsize $\pm$ 3.28}} & \underline{{5.30 \scriptsize $\pm$ 0.18}} \\
\bottomrule
\end{tabular}%
}
\label{tab:semi}
\end{table*}

In our effort to conduct a fair comparison between our proposed method and existing methodologies, we meticulously designed our experiments building upon previous scholarly work. TorchSSL \citep{zhang2021flexmatch}, a sophisticated codebase encompassing a wide array of semi-supervised learning techniques as well as supervised learning implementations, was employed as our foundational code base. This enables us to implement our algorithm effectively and assess its performance on well-established datasets like CIFAR-10, CIFAR-100, and STL-10. In the realm of supervised learning, our unique loss components are applied to annotated data, facilitating the computation of both mutual information loss and information entropy difference loss. For semi-supervised learning scenarios, these loss components are extended to unlabeled data, enhancing the calculation of these loss metrics during the unsupervised learning phase. We use an SGD optimizer, configured with a momentum of 0.9 and a weight decay parameter of $5e^{-4}$. The learning rate was initially set at 0.03, subject to cosine annealing. We report the performance metrics over several runs of seeds. The batch size are maintained at 64 across a comprehensive 1,048,000 iterations training regimen. Concerning model architecture, WideResNet-28-2, WideResNet-28-8, and WideResNet-37-2 are respectively chosen for datasets CIFAR-10, CIFAR-100, and STL-10, 

\begin{table}[t]
\centering
\caption{Results for fully supervised learning}
\begin{tabular}{c|cc}
\toprule
Datasets         & CIFAR-10 & CIFAR-100 \\ \midrule
Fully supervised &          95.35&           80.77\\
Ours (MIR)       &          95.52&           80.81\\
Ours (HDR)       &          \textbf{95.57}&          \textbf{80.96}\\ \bottomrule

\end{tabular}
\label{tab:fullysupervised}
\end{table}

We train supervised and semi-supervised learning models using mutual information and information entropy difference as constraints in the loss function. Table \ref{tab:semi}  and  \ref{tab:fullysupervised} present the performance of semi-supervised and supervised learning, respectively. It is observed that applying mutual information and information entropy constraints led to a slight improvement in supervised learning performance. We believe this is because sufficient labeled data provides adequate information constraints, leading to only a modest enhancement in performance. However, in semi-supervised learning, in most settings, maximizing mutual information and minimizing information entropy resulted in the best or second-best performance. Additionally, our method consistently outperformed our baseline, FreeMatch, across various settings. This suggests that in situations with insufficient labeled samples, additional information constraints can more effectively improve model performance.

\section{Conclusion}

In conclusion, we have made significant advancements in understanding the dynamics of supervised learning by utilizing matrix information and Neural Collapse theory. Our introduction of matrix mutual information ratio (MIR) and matrix entropy difference ratio (HDR) provide novel insights into the interplay between data representations and classification head vectors, serving as new tools to understand the dynamics of neural networks.

Through a series of rigorous theoretical and empirical analyses, we demonstrate the effectiveness of MIR and HDR in elucidating various neural network phenomena, such as grokking, and their utility in improving training dynamics. The incorporation of these metrics as loss functions in supervised and semi-supervised learning shows promising results, indicating their potential to enhance model performance and training efficiency. This study not only contributes to the field of machine learning by offering new analytical tools but also 
applies matrix information to optimize supervised learning algorithms.

\section*{Acknowledgment}
Huimin Ma and Bochao Zou are supported by the National Nature Science Foundation of China (No.U20B2062, No.62227801, No.62206015) and National Science and Technology Major Project (2022ZD0117900).

Weiran Huang is supported by 2023 CCF-Baidu Open Fund and Microsoft Research Asia.

We would also like to express our sincere gratitude to the reviewers of ICML 2024 for their insightful and constructive feedback. Their valuable comments have greatly contributed to improving the quality of our work.

\section*{Impact Statement}
This paper presents work whose goal is to advance the field of Machine Learning. There are many potential societal consequences of our work, none which we feel must be specifically highlighted here.

{\small
\bibliography{reference}

\begin{thebibliography}{41}
\providecommand{\natexlab}[1]{#1}
\providecommand{\url}[1]{\texttt{#1}}
\expandafter\ifx\csname urlstyle\endcsname\relax
  \providecommand{\doi}[1]{doi: #1}\else
  \providecommand{\doi}{doi: \begingroup \urlstyle{rm}\Url}\fi

\bibitem[Alt{\i}nta{\c{s}} et~al.(2023)Alt{\i}nta{\c{s}}, Bachmann, Noci, and Hofmann]{altintacs2023disentangling}
Alt{\i}nta{\c{s}}, G.~S., Bachmann, G., Noci, L., and Hofmann, T.
\newblock Disentangling linear mode connectivity.
\newblock In \emph{UniReps: the First Workshop on Unifying Representations in Neural Models}, 2023.

\bibitem[Bach(2022)]{bach2022information}
Bach, F.
\newblock Information theory with kernel methods.
\newblock \emph{IEEE Transactions on Information Theory}, 69\penalty0 (2):\penalty0 752--775, 2022.

\bibitem[Berthelot et~al.(2019{\natexlab{a}})Berthelot, Carlini, Cubuk, Kurakin, Sohn, Zhang, and Raffel]{berthelot2019remixmatch}
Berthelot, D., Carlini, N., Cubuk, E.~D., Kurakin, A., Sohn, K., Zhang, H., and Raffel, C.
\newblock Remixmatch: Semi-supervised learning with distribution alignment and augmentation anchoring.
\newblock \emph{arXiv preprint arXiv:1911.09785}, 2019{\natexlab{a}}.

\bibitem[Berthelot et~al.(2019{\natexlab{b}})Berthelot, Carlini, Goodfellow, Papernot, Oliver, and Raffel]{berthelot2019mixmatch}
Berthelot, D., Carlini, N., Goodfellow, I., Papernot, N., Oliver, A., and Raffel, C.~A.
\newblock Mixmatch: A holistic approach to semi-supervised learning.
\newblock \emph{Advances in neural information processing systems}, 32, 2019{\natexlab{b}}.

\bibitem[Chan et~al.(2016)Chan, Jaitly, Le, and Vinyals]{chan2016listen}
Chan, W., Jaitly, N., Le, Q., and Vinyals, O.
\newblock Listen, attend and spell: A neural network for large vocabulary conversational speech recognition.
\newblock In \emph{2016 IEEE international conference on acoustics, speech and signal processing (ICASSP)}, pp.\  4960--4964. IEEE, 2016.

\bibitem[Chen et~al.(2023)Chen, Tao, Fan, Wang, Wang, Schiele, Xie, Raj, and Savvides]{chen2023softmatch}
Chen, H., Tao, R., Fan, Y., Wang, Y., Wang, J., Schiele, B., Xie, X., Raj, B., and Savvides, M.
\newblock Softmatch: Addressing the quantity-quality trade-off in semi-supervised learning.
\newblock \emph{International Conference on Learning Representations (ICLR)}, 2023.

\bibitem[Frankle et~al.(2020)Frankle, Dziugaite, Roy, and Carbin]{frankle2020linear}
Frankle, J., Dziugaite, G.~K., Roy, D., and Carbin, M.
\newblock Linear mode connectivity and the lottery ticket hypothesis.
\newblock In \emph{International Conference on Machine Learning}, pp.\  3259--3269. PMLR, 2020.

\bibitem[Garrido et~al.(2023)Garrido, Balestriero, Najman, and Lecun]{garrido2023rankme}
Garrido, Q., Balestriero, R., Najman, L., and Lecun, Y.
\newblock Rankme: Assessing the downstream performance of pretrained self-supervised representations by their rank.
\newblock In \emph{International Conference on Machine Learning}, pp.\  10929--10974. PMLR, 2023.

\bibitem[Girshick(2015)]{girshick2015fast}
Girshick, R.
\newblock Fast r-cnn.
\newblock In \emph{Proceedings of the IEEE international conference on computer vision}, pp.\  1440--1448, 2015.

\bibitem[Han et~al.(2015)Han, Pool, Tran, and Dally]{han2015learning}
Han, S., Pool, J., Tran, J., and Dally, W.
\newblock Learning both weights and connections for efficient neural network.
\newblock \emph{Advances in neural information processing systems}, 28, 2015.

\bibitem[Han et~al.(2021)Han, Papyan, and Donoho]{han2021neural}
Han, X., Papyan, V., and Donoho, D.~L.
\newblock Neural collapse under mse loss: Proximity to and dynamics on the central path.
\newblock In \emph{ICLR}, 2021.

\bibitem[He et~al.(2016)He, Zhang, Ren, and Sun]{he2016deep}
He, K., Zhang, X., Ren, S., and Sun, J.
\newblock Deep residual learning for image recognition.
\newblock In \emph{Proceedings of the IEEE conference on computer vision and pattern recognition}, pp.\  770--778, 2016.

\bibitem[Hinton et~al.(2012)Hinton, Deng, Yu, Dahl, Mohamed, Jaitly, Senior, Vanhoucke, Nguyen, Sainath, et~al.]{hinton2012deep}
Hinton, G., Deng, L., Yu, D., Dahl, G.~E., Mohamed, A.-r., Jaitly, N., Senior, A., Vanhoucke, V., Nguyen, P., Sainath, T.~N., et~al.
\newblock Deep neural networks for acoustic modeling in speech recognition: The shared views of four research groups.
\newblock \emph{IEEE Signal processing magazine}, 29\penalty0 (6):\penalty0 82--97, 2012.

\bibitem[Krizhevsky et~al.(2012)Krizhevsky, Sutskever, and Hinton]{krizhevsky2012imagenet}
Krizhevsky, A., Sutskever, I., and Hinton, G.~E.
\newblock Imagenet classification with deep convolutional neural networks.
\newblock In \emph{Advances in neural information processing systems}, pp.\  1097--1105, 2012.

\bibitem[Lee et~al.(2013)]{lee2013pseudo}
Lee, D.-H. et~al.
\newblock Pseudo-label: The simple and efficient semi-supervised learning method for deep neural networks.
\newblock In \emph{Workshop on challenges in representation learning, ICML}, volume~3, pp.\  896, 2013.

\bibitem[Lin et~al.(2014)Lin, Maire, Belongie, Hays, Perona, Ramanan, Doll{\'a}r, and Zitnick]{lin2014microsoft}
Lin, T.-Y., Maire, M., Belongie, S., Hays, J., Perona, P., Ramanan, D., Doll{\'a}r, P., and Zitnick, C.~L.
\newblock Microsoft coco: Common objects in context.
\newblock In \emph{European conference on computer vision}, pp.\  740--755. Springer, 2014.

\bibitem[Miyato et~al.(2018)Miyato, Maeda, Koyama, and Ishii]{miyato2018virtual}
Miyato, T., Maeda, S.-i., Koyama, M., and Ishii, S.
\newblock Virtual adversarial training: a regularization method for supervised and semi-supervised learning.
\newblock \emph{IEEE transactions on pattern analysis and machine intelligence}, 41\penalty0 (8):\penalty0 1979--1993, 2018.

\bibitem[Nanda et~al.(2022)Nanda, Chan, Lieberum, Smith, and Steinhardt]{nanda2022progress}
Nanda, N., Chan, L., Lieberum, T., Smith, J., and Steinhardt, J.
\newblock Progress measures for grokking via mechanistic interpretability.
\newblock In \emph{The Eleventh International Conference on Learning Representations}, 2022.

\bibitem[Papyan et~al.(2020)Papyan, Han, and Donoho]{papyan2020prevalence}
Papyan, V., Han, X., and Donoho, D.~L.
\newblock Prevalence of neural collapse during the terminal phase of deep learning training.
\newblock \emph{Proceedings of the National Academy of Sciences}, 117\penalty0 (40):\penalty0 24652--24663, 2020.

\bibitem[Pham et~al.(2021)Pham, Dai, Xie, and Le]{pham2021meta}
Pham, H., Dai, Z., Xie, Q., and Le, Q.~V.
\newblock Meta pseudo labels.
\newblock In \emph{Proceedings of the IEEE/CVF Conference on Computer Vision and Pattern Recognition}, pp.\  11557--11568, 2021.

\bibitem[Power et~al.(2022)Power, Burda, Edwards, Babuschkin, and Misra]{power2022grokking}
Power, A., Burda, Y., Edwards, H., Babuschkin, I., and Misra, V.
\newblock Grokking: Generalization beyond overfitting on small algorithmic datasets.
\newblock \emph{arXiv preprint arXiv:2201.02177}, 2022.

\bibitem[Rasmus et~al.(2015)Rasmus, Berglund, Honkala, Valpola, and Raiko]{rasmus2015semi}
Rasmus, A., Berglund, M., Honkala, M., Valpola, H., and Raiko, T.
\newblock Semi-supervised learning with ladder networks.
\newblock \emph{Advances in Neural Information Processing Systems}, 28:\penalty0 3546--3554, 2015.

\bibitem[Ronneberger et~al.(2015)Ronneberger, Fischer, and Brox]{ronneberger2015u}
Ronneberger, O., Fischer, P., and Brox, T.
\newblock U-net: Convolutional networks for biomedical image segmentation.
\newblock In \emph{Medical Image Computing and Computer-Assisted Intervention--MICCAI 2015: 18th International Conference, Munich, Germany, October 5-9, 2015, Proceedings, Part III 18}, pp.\  234--241. Springer, 2015.

\bibitem[Skean et~al.(2023)Skean, Osorio, Brockmeier, and Giraldo]{skean2023dime}
Skean, O., Osorio, J. K.~H., Brockmeier, A.~J., and Giraldo, L. G.~S.
\newblock Dime: Maximizing mutual information by a difference of matrix-based entropies.
\newblock \emph{arXiv preprint arXiv:2301.08164}, 2023.

\bibitem[Sohn et~al.(2020)Sohn, Berthelot, Carlini, Zhang, Zhang, Raffel, Cubuk, Kurakin, and Li]{sohn2020fixmatch}
Sohn, K., Berthelot, D., Carlini, N., Zhang, Z., Zhang, H., Raffel, C.~A., Cubuk, E.~D., Kurakin, A., and Li, C.-L.
\newblock Fixmatch: Simplifying semi-supervised learning with consistency and confidence.
\newblock \emph{Advances in neural information processing systems}, 33:\penalty0 596--608, 2020.

\bibitem[Szegedy et~al.(2016)Szegedy, Vanhoucke, Ioffe, Shlens, and Wojna]{szegedy2016rethinking}
Szegedy, C., Vanhoucke, V., Ioffe, S., Shlens, J., and Wojna, Z.
\newblock Rethinking the inception architecture for computer vision.
\newblock In \emph{Proceedings of the IEEE conference on computer vision and pattern recognition}, pp.\  2818--2826, 2016.

\bibitem[Tan \& Huang(2023)Tan and Huang]{tan2023understanding}
Tan, Z. and Huang, W.
\newblock Understanding grokking through a robustness viewpoint.
\newblock \emph{arXiv preprint arXiv:2311.06597}, 2023.

\bibitem[Tan et~al.(2023{\natexlab{a}})Tan, Wang, and Zhang]{tan2023seal}
Tan, Z., Wang, Z., and Zhang, Y.
\newblock Seal: Simultaneous label hierarchy exploration and learning.
\newblock \emph{arXiv preprint arXiv:2304.13374}, 2023{\natexlab{a}}.

\bibitem[Tan et~al.(2023{\natexlab{b}})Tan, Yang, Huang, Yuan, and Zhang]{tan2023information}
Tan, Z., Yang, J., Huang, W., Yuan, Y., and Zhang, Y.
\newblock Information flow in self-supervised learning.
\newblock \emph{arXiv e-prints}, pp.\  arXiv--2309, 2023{\natexlab{b}}.

\bibitem[Tan et~al.(2023{\natexlab{c}})Tan, Zheng, and Huang]{tan2023otmatch}
Tan, Z., Zheng, K., and Huang, W.
\newblock Otmatch: Improving semi-supervised learning with optimal transport.
\newblock \emph{arXiv preprint arXiv:2310.17455}, 2023{\natexlab{c}}.

\bibitem[Tarvainen \& Valpola(2017)Tarvainen and Valpola]{tarvainen2017mean}
Tarvainen, A. and Valpola, H.
\newblock Mean teachers are better role models: Weight-averaged consistency targets improve semi-supervised deep learning results.
\newblock \emph{Advances in neural information processing systems}, 30, 2017.

\bibitem[Vaswani et~al.(2017)Vaswani, Shazeer, Parmar, Uszkoreit, Jones, Gomez, Kaiser, and Polosukhin]{vaswani2017attention}
Vaswani, A., Shazeer, N., Parmar, N., Uszkoreit, J., Jones, L., Gomez, A.~N., Kaiser, {\L}., and Polosukhin, I.
\newblock Attention is all you need.
\newblock \emph{Advances in neural information processing systems}, 30, 2017.

\bibitem[Wang et~al.(2021)Wang, Al-Bashabsheh, Zhao, and Chan]{wang2021adaptive}
Wang, X., Al-Bashabsheh, A., Zhao, C., and Chan, C.
\newblock Adaptive label smoothing for classifier-based mutual information neural estimation.
\newblock In \emph{2021 IEEE International Symposium on Information Theory (ISIT)}, pp.\  1035--1040. IEEE, 2021.

\bibitem[Wang et~al.(2023)Wang, Chen, Heng, Hou, Fan, Wu, Wang, Savvides, Shinozaki, Raj, et~al.]{wang2022freematch}
Wang, Y., Chen, H., Heng, Q., Hou, W., Fan, Y., Wu, Z., Wang, J., Savvides, M., Shinozaki, T., Raj, B., et~al.
\newblock Freematch: Self-adaptive thresholding for semi-supervised learning.
\newblock In \emph{Eleventh International Conference on Learning Representations}, 2023.

\bibitem[Wei et~al.(2024)Wei, Tan, Li, Wang, and Huang]{wei2024large}
Wei, L., Tan, Z., Li, C., Wang, J., and Huang, W.
\newblock Large language model evaluation via matrix entropy.
\newblock \emph{arXiv preprint arXiv:2401.17139}, 2024.

\bibitem[Xie et~al.(2020)Xie, Dai, Hovy, Luong, and Le]{xie2020unsupervised}
Xie, Q., Dai, Z., Hovy, E., Luong, T., and Le, Q.
\newblock Unsupervised data augmentation for consistency training.
\newblock \emph{Advances in Neural Information Processing Systems}, 33:\penalty0 6256--6268, 2020.

\bibitem[Xu et~al.(2021)Xu, Shang, Ye, Qian, Li, Sun, Li, and Jin]{xu2021dash}
Xu, Y., Shang, L., Ye, J., Qian, Q., Li, Y.-F., Sun, B., Li, H., and Jin, R.
\newblock Dash: Semi-supervised learning with dynamic thresholding.
\newblock In \emph{International Conference on Machine Learning}, pp.\  11525--11536. PMLR, 2021.

\bibitem[Zhang et~al.(2021)Zhang, Wang, Hou, Wu, Wang, Okumura, and Shinozaki]{zhang2021flexmatch}
Zhang, B., Wang, Y., Hou, W., Wu, H., Wang, J., Okumura, M., and Shinozaki, T.
\newblock Flexmatch: Boosting semi-supervised learning with curriculum pseudo labeling.
\newblock \emph{Advances in Neural Information Processing Systems}, 34:\penalty0 18408--18419, 2021.

\bibitem[Zhang et~al.(2023{\natexlab{a}})Zhang, Tan, Yang, Huang, and Yuan]{zhang2023matrix}
Zhang, Y., Tan, Z., Yang, J., Huang, W., and Yuan, Y.
\newblock Matrix information theory for self-supervised learning.
\newblock \emph{arXiv preprint arXiv:2305.17326}, 2023{\natexlab{a}}.

\bibitem[Zhang et~al.(2023{\natexlab{b}})Zhang, Yang, Tan, and Yuan]{zhang2023relationmatch}
Zhang, Y., Yang, J., Tan, Z., and Yuan, Y.
\newblock Relationmatch: Matching in-batch relationships for semi-supervised learning.
\newblock \emph{arXiv preprint arXiv:2305.10397}, 2023{\natexlab{b}}.

\bibitem[Zhou et~al.(2022)Zhou, You, Li, Liu, Liu, Qu, and Zhu]{zhou2022all}
Zhou, J., You, C., Li, X., Liu, K., Liu, S., Qu, Q., and Zhu, Z.
\newblock Are all losses created equal: A neural collapse perspective.
\newblock \emph{NeurIPS}, 35:\penalty0 31697--31710, 2022.

\end{thebibliography}
\bibliographystyle{icml2024}
}

\clearpage
\appendix
\onecolumn

\begin{center}
\textbf{\Large Appendix}

\end{center}

\section{Detailed proofs for Neural Collapse related Theorems}

\begin{theorem} \label{direct NC Appendix}
Suppose Neural collapse happens. Then $\operatorname{HDR}(\mathbf{G}(\mathbf{W}^T), \mathbf{G}(\mathbf{M})) = 0$ and $\operatorname{MIR}(\mathbf{G}(\mathbf{W}^T), \mathbf{G}(\mathbf{M})) = \frac{1}{C-1} + \frac{(C-2)\log(C-2)}{(C-1)\log(C-1)}$.
\end{theorem}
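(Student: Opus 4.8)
The plan is to first collapse the two-argument information quantities into single-matrix quantities using (NC 3), and then carry out an explicit spectral computation driven by (NC 2).

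The conceptual crux is the observation that $\mathbf{G}(\mathbf{W}^\top)$ and $\mathbf{G}(\mathbf{M})$ are literally the same matrix. Indeed, (NC 3) states $\mathbf{W}^\top = \frac{\|\mathbf{W}\|_F}{\|\mathbf{M}\|_F}\mathbf{M}$, so each column of $\mathbf{W}^\top$ is the same positive scalar multiple of the corresponding column of $\mathbf{M}$; after $\ell_2$-normalizing each column (Definition \ref{gram}) this scalar cancels, so $\mathbf{G}(\mathbf{W}^\top) = \mathbf{G}(\mathbf{M}) =: \mathbf{K}$. This immediately gives $\operatorname{H}(\mathbf{G}(\mathbf{W}^\top)) = \operatorname{H}(\mathbf{G}(\mathbf{M}))$, hence $\operatorname{HDR}=0$, and reduces the remaining target to the ``self'' quantity $\operatorname{MIR}(\mathbf{K},\mathbf{K}) = \frac{2\operatorname{H}(\mathbf{K}) - \operatorname{H}(\mathbf{K}\odot\mathbf{K})}{\operatorname{H}(\mathbf{K})}$.

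Next I would identify $\mathbf{K}$ explicitly. Its $(i,j)$ entry is $\cos(\tilde{\mu}_i,\tilde{\mu}_j)$, which by (NC 2) equals $\frac{C}{C-1}\delta^i_j - \frac{1}{C-1}$; in matrix form $\mathbf{K} = \frac{C}{C-1}\mathbf{I} - \frac{1}{C-1}\mathbf{1}\mathbf{1}^\top$, a rank-one perturbation of a multiple of the identity. Diagonalizing against the split $\mathbb{R}^C = \operatorname{span}(\mathbf{1}) \oplus \mathbf{1}^\perp$, the matrix $\tfrac{1}{C}\mathbf{K}$ has eigenvalue $0$ on $\mathbf{1}$ and eigenvalue $\frac{1}{C-1}$ with multiplicity $C-1$ on $\mathbf{1}^\perp$. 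Since the matrix entropy is the von Neumann entropy $-\sum_k \lambda_k \log\lambda_k$ of $\tfrac{1}{C}\mathbf{K}$ and $0\log 0 = 0$, this yields $\operatorname{H}(\mathbf{K}) = \log(C-1)$.

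The main computational step is the Hadamard square. Squaring entrywise gives $\mathbf{K}\odot\mathbf{K} = \bigl(1 - \tfrac{1}{(C-1)^2}\bigr)\mathbf{I} + \tfrac{1}{(C-1)^2}\mathbf{1}\mathbf{1}^\top$, again of the form $\alpha\mathbf{I} + \beta\mathbf{1}\mathbf{1}^\top$, so the same spectral split applies: $\tfrac{1}{C}(\mathbf{K}\odot\mathbf{K})$ has eigenvalue $\frac{1}{C-1}$ on $\mathbf{1}$ and eigenvalue $\frac{C-2}{(C-1)^2}$ with multiplicity $C-1$ on $\mathbf{1}^\perp$ (the trace is $1$, a useful sanity check). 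Feeding these into the von Neumann entropy and grouping the $\log(C-1)$ and $\log(C-2)$ contributions gives $\operatorname{H}(\mathbf{K}\odot\mathbf{K}) = \frac{2C-3}{C-1}\log(C-1) - \frac{C-2}{C-1}\log(C-2)$. Substituting into $\operatorname{MI}(\mathbf{K},\mathbf{K}) = 2\operatorname{H}(\mathbf{K}) - \operatorname{H}(\mathbf{K}\odot\mathbf{K})$, the $\log(C-1)$ coefficients collapse to $\frac{1}{C-1}\log(C-1)$, leaving $\operatorname{MI}(\mathbf{K},\mathbf{K}) = \frac{1}{C-1}\log(C-1) + \frac{C-2}{C-1}\log(C-2)$; dividing by $\operatorname{H}(\mathbf{K}) = \log(C-1)$ reproduces exactly $\frac{1}{C-1} + \frac{(C-2)\log(C-2)}{(C-1)\log(C-1)}$.

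The only real obstacle is bookkeeping: keeping the two eigenspaces and their multiplicities straight through the Hadamard square, and checking that the degenerate case $C=2$ does not break the simplification (there $\log(C-2)$ is singular but its coefficient, and the corresponding eigenvalue, vanish, so the $0\log 0 = 0$ convention makes the $\log(C-2)$ term drop and $\operatorname{MIR}=1$). Everything else is routine rank-one spectral algebra.
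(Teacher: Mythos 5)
Your proposal is correct and follows essentially the same route as the paper's own proof: (NC~3) forces $\mathbf{G}(\mathbf{W}^T)=\mathbf{G}(\mathbf{M})$ (hence $\operatorname{HDR}=0$), and (NC~2) identifies this common Gram matrix as a rank-one perturbation of the identity, whose spectrum (eigenvalue $0$ on $\operatorname{span}(\mathbf{1})$, $\tfrac{1}{C-1}$ on $\mathbf{1}^{\perp}$, and the analogous split for the Hadamard square) yields exactly the paper's entropy values $\log(C-1)$ and $\tfrac{2C-3}{C-1}\log(C-1)-\tfrac{C-2}{C-1}\log(C-2)$ and thus the stated MIR. Your trace sanity check and the explicit handling of the degenerate case $C=2$ are small additions the paper omits, but the argument is the same.
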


\begin{proof} \label{boundary appendix}

By (NC 3), we know that $\mathbf{W}^T =  \frac{\| \mathbf{W}\|_F}{\| \mathbf{M} \|_F} \mathbf{M}$. Noticing that $\frac{\| \mathbf{W}\|_F}{\| \mathbf{M} \|_F} > 0$, we know that $\frac{w_i}{\| w_i \|} = \frac{\tilde{\mu}_i}{ \| \tilde{\mu}_i |}$. It is then very clear that $\mathbf{G}(\mathbf{W}^T) =  \mathbf{G}(\mathbf{M})$. Therefore from definition \ref{gram} and \ref{HDR}, it is clear that $\operatorname{HDR}(\mathbf{G}(\mathbf{W}^T), \mathbf{G}(\mathbf{M})) = 0$. 

Define $\mathcal{E}(\alpha) = \begin{bmatrix}
 1 & \alpha & \cdots & \alpha \\   
  \alpha & 1 & \cdots & \alpha \\  
 \vdots & \vdots & \vdots & \vdots \\
 \alpha & \alpha & \cdots & 1 \\
\end{bmatrix}$. From (NC 2), we know that $\mathbf{G}(\mathbf{W}^T) =  \mathbf{G}(\mathbf{M}) = \mathcal{E}(\frac{-1}{C-1})$ and $\mathbf{G}(\mathbf{W}^T) \odot \mathbf{G}(\mathbf{M}) = \mathcal{E}(\frac{1}{(C-1)^2})$. Notice that $\mathcal{E}(\alpha) = (1-\alpha) \mathbf{I}_C + \alpha \mathbf{1}^T_C \mathbf{1}_C$, we can obtain its spectrum as $1-\alpha$ ($C-1$ times) and $1+ (C-1)\alpha$ ($1$ time). Therefore, we can obtain that $\operatorname{H}(\mathbf{G}(\mathbf{W}^T)) = \operatorname{H}(\mathbf{G}(\mathbf{M})) = \log (C-1)$. And $\operatorname{H}(\mathbf{G}(\mathbf{W}^T) \odot \mathbf{G}(\mathbf{M})) = - \frac{1}{C-1} \log \frac{1}{C-1} - (C-1) \frac{C-2}{(C-1)^2} \log \frac{C-2}{(C-1)^2} = \frac{1}{C-1} \log (C-1) - \frac{C-2}{C-1} \log (C-2) + \frac{2(C-2)}{C-1} \log (C-1) = (2-\frac{1}{C-1})\log (C-1) - \frac{C-2}{C-1} \log (C-2).$ Then then conclusion follows from definition \ref{MIR}.

\end{proof}

As the linear weight matrix $\mathbf{W}$ can be seen as (prototype) embedding for each class. It is natural to consider the mutual information and entropy difference between sample embedding and label embedding. We discuss this in the following theorem \ref{feature NC}.

\begin{corollary} \label{feature NC appendix}
Suppose the dataset is class-balanced, $\mu_G =0$ and Neural collapse happens. Denote $\mathbf{Z}_1 = [h(\mathbf{x}_1) \cdots h(\mathbf{x}_n)] \in \mathbb{R}^{d \times n}$ and $\mathbf{Z}_2 = [w_{y_1} \cdots w_{y_n}] \in \mathbb{R}^{d \times n}$. Then $\operatorname{HDR}(\mathbf{Z}_1, \mathbf{Z}_2) = 0$ and $\operatorname{MIR}(\mathbf{Z}_1, \mathbf{Z}_2) = \frac{1}{C-1} + \frac{(C-2)\log(C-2)}{(C-1)\log(C-1)}$.
\end{corollary}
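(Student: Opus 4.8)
The plan is to reduce this corollary to Theorem \ref{direct NC Appendix} by observing that the gram matrices appearing here are merely block-replicated versions of the $C\times C$ gram matrices from the theorem, and that this replication leaves every matrix entropy unchanged. Throughout I write $m = n/C$ for the common class size, which is well-defined by class balance, and I read $\operatorname{MIR}(\mathbf{Z}_1,\mathbf{Z}_2)$ and $\operatorname{HDR}(\mathbf{Z}_1,\mathbf{Z}_2)$ as the quantities built from $\mathbf{G}(\mathbf{Z}_1)$ and $\mathbf{G}(\mathbf{Z}_2)$, consistent with the rest of the paper.

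First I would identify the normalized columns. By (NC 1) together with $\mu_G = 0$ we have $h(\mathbf{x}_i) = \mu_{y_i} = \tilde{\mu}_{y_i}$, so the $l_2$-normalized feature is $\hat{\mathbf{z}}^{(1)}_i = \tilde{\mu}_{y_i}/\|\tilde{\mu}_{y_i}\|$; by (NC 3) the normalized weight satisfies $w_{y_i}/\|w_{y_i}\| = \tilde{\mu}_{y_i}/\|\tilde{\mu}_{y_i}\|$. Hence the two sets of normalized columns coincide and $\mathbf{G}(\mathbf{Z}_1) = \mathbf{G}(\mathbf{Z}_2)$, which immediately gives $\operatorname{H}(\mathbf{G}(\mathbf{Z}_1)) = \operatorname{H}(\mathbf{G}(\mathbf{Z}_2))$ and therefore $\operatorname{HDR}(\mathbf{Z}_1,\mathbf{Z}_2) = 0$.

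Next, after reordering the samples by class (a simultaneous permutation of rows and columns, which preserves the spectrum and hence every matrix entropy), (NC 2) shows that the $(i,j)$ entry of $\mathbf{G}(\mathbf{Z}_1)$ equals $1$ when $y_i = y_j$ and $-\tfrac{1}{C-1}$ otherwise. In block form this is exactly $\mathbf{G}(\mathbf{M}) \otimes \mathbf{J}_m$, where $\mathbf{J}_m = \mathbf{1}_m\mathbf{1}_m^{T}$ is the $m\times m$ all-ones matrix and $\mathbf{G}(\mathbf{M}) = \mathcal{E}(-\tfrac{1}{C-1})$ is precisely the matrix from Theorem \ref{direct NC Appendix}. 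Using the identity $(A\otimes \mathbf{J}_m)\odot(B\otimes \mathbf{J}_m) = (A\odot B)\otimes \mathbf{J}_m$, which holds because $\mathbf{J}_m\odot\mathbf{J}_m = \mathbf{J}_m$, together with $\mathbf{G}(\mathbf{Z}_1)=\mathbf{G}(\mathbf{Z}_2)$, the Hadamard product becomes $\mathbf{G}(\mathbf{Z}_1)\odot\mathbf{G}(\mathbf{Z}_2) = (\mathbf{G}(\mathbf{W}^T)\odot\mathbf{G}(\mathbf{M}))\otimes \mathbf{J}_m = \mathcal{E}(\tfrac{1}{(C-1)^2})\otimes \mathbf{J}_m$.

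The key step is then an entropy-invariance observation: for any $C\times C$ matrix $A$ with unit diagonal, the nonzero eigenvalues of $A\otimes \mathbf{J}_m$ are exactly the numbers $m\lambda$ for each nonzero eigenvalue $\lambda$ of $A$, carrying the same multiplicities, since $\mathbf{J}_m$ contributes a single eigenvalue $m$ and $m-1$ zeros. Because $A\otimes\mathbf{J}_m$ is $n\times n$ with $n = Cm$, the density matrix $\tfrac{1}{n}(A\otimes\mathbf{J}_m)$ has the same nonzero spectrum $\{\tfrac{\lambda}{C}\}$ as $\tfrac{1}{C}A$, and therefore $\operatorname{H}(A\otimes\mathbf{J}_m) = \operatorname{H}(A)$. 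Applying this to $A = \mathbf{G}(\mathbf{M})$ and to $A = \mathcal{E}(\tfrac{1}{(C-1)^2})$ shows that $\operatorname{H}(\mathbf{G}(\mathbf{Z}_1))$, $\operatorname{H}(\mathbf{G}(\mathbf{Z}_2))$, and $\operatorname{H}(\mathbf{G}(\mathbf{Z}_1)\odot\mathbf{G}(\mathbf{Z}_2))$ coincide exactly with the corresponding entropies computed in Theorem \ref{direct NC Appendix}, so $\operatorname{MIR}(\mathbf{Z}_1,\mathbf{Z}_2)$ equals $\tfrac{1}{C-1} + \tfrac{(C-2)\log(C-2)}{(C-1)\log(C-1)}$. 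I expect this entropy-invariance step to be the main obstacle: one must carefully track multiplicities through the Kronecker product and verify that the $\tfrac{1}{d}$ normalization in the entropy definition (with $d=n$ here versus $d=C$ in the theorem) exactly cancels the extra factor $m$ in the eigenvalues. Everything else is a direct transcription of the spectral structure already established for Theorem \ref{direct NC Appendix}.
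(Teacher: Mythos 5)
Your proposal is correct and follows essentially the same route as the paper: the paper's proof likewise reorders samples by class, writes the gram matrices as $\mathcal{E}(\tfrac{-1}{C-1}) \otimes \mathbf{1}_{n_1,n_1}$ and $\mathcal{E}(\tfrac{1}{(C-1)^2}) \otimes \mathbf{1}_{n_1,n_1}$, and invokes the Kronecker-product spectrum property to reduce to Theorem \ref{direct NC Appendix}. The only difference is one of detail: the paper compresses the entropy-invariance step into ``the corollary follows by the same proof of the theorem,'' whereas you explicitly verify that the factor $n_1$ in the eigenvalues cancels against the $\tfrac{1}{n}$ normalization, which is exactly the calculation the paper leaves implicit.
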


\begin{proof}
Denote $n_1 = \frac{n}{C}$ the number of samples in each class. Without loss of generality, assume samples are arranged as $C$ consecutive groups, each group has samples from the same class. 

Define $\mathcal{E}(\alpha) = \begin{bmatrix}
 1 & \alpha & \cdots & \alpha \\   
  \alpha & 1 & \cdots & \alpha \\  
 \vdots & \vdots & \vdots & \vdots \\
 \alpha & \alpha & \cdots & 1 \\
\end{bmatrix}$ and $\mathbf{1}_{n_1, n_1}$ a $n_1 \times n_1$ matrix with all its element $1$.

From (NC 1), we know that $\mathbf{G}(\mathbf{W}^T) =  \mathbf{G}(\mathbf{M}) = \mathcal{E}(\frac{-1}{C-1}) \otimes \mathbf{1}_{n_1, n_1}$ and $\mathbf{G}(\mathbf{W}^T) \odot \mathbf{G}(\mathbf{M}) = \mathcal{E}(\frac{1}{(C-1)^2}) \otimes \mathbf{1}_{n_1, n_1}$, where $\otimes$ is the Kronecker product. By the property of the Kronecker product spectrum, the non-zero spectrum of $\mathbf{G}(\mathbf{W}^T)$ will be $n_1$ times the spectrum of $ \mathcal{E}(\frac{-1}{C-1}) $. Then this corollary follows by the same proof of the theorem \ref{direct NC}.
\end{proof}

\section{Some theoretical guarantees for HDR}

Mutual information is a very intuitive quantity in information theory. On the other hand, it seems weird to consider the difference of entropy, but we will show that this quantity is closely linked with comparing the approximation ability of different representations on the same target.

For ease of theoretical analysis, in this section, we consider the MSE regression loss.

The following lemma \ref{approx} shows that the regression of two sets of representations $\mathbf{Z}_1$ and $\mathbf{Z}_2$ to the same target $\mathbf{Y}$ are closely related. And the two approximation errors are closely related to the regression error of $\mathbf{Z}_1$ to $\mathbf{Z}_2$.

\begin{lemma} \label{approx Appendix}
Suppose $\mathbf{W}^*_1, \mathbf{b}^*_1 = \argmin_{\mathbf{W}, \mathbf{b}}  \|\mathbf{Y} - (\mathbf{W} \mathbf{Z}_1 + \mathbf{b} \mathbf{1}_N )\|_F$. Then $\min_{\mathbf{W}, \mathbf{b}} \|\mathbf{Y} - (\mathbf{W} \mathbf{Z}_2 + \mathbf{b} \mathbf{1}_N )\|_F \leq \min_{\mathbf{W}, \mathbf{b}}  \|\mathbf{Y} - (\mathbf{W} \mathbf{Z}_1 + \mathbf{b} \mathbf{1}_N )\|_F + \| \mathbf{W}^*_1\|_F \min_{\mathbf{H}, \mathbf{\eta}}  \|\mathbf{Z}_1 - (\mathbf{H} \mathbf{Z}_2 + \mathbf{\eta} \mathbf{1}_N )\|_F$.   
\end{lemma}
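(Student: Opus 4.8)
The plan is to prove this by a triangle-inequality argument after substituting the optimal $\mathbf{Z}_1 \to \mathbf{Z}_2$ reconstruction into the optimal $\mathbf{Z}_1 \to \mathbf{Y}$ regressor. First I would fix notation: let $(\mathbf{H}^*, \mathbf{\eta}^*)$ be a minimizer of $\|\mathbf{Z}_1 - (\mathbf{H}\mathbf{Z}_2 + \mathbf{\eta}\mathbf{1}_N)\|_F$, so that the right-hand factor $\min_{\mathbf{H}, \mathbf{\eta}} \|\cdots\|_F$ equals $\|\mathbf{Z}_1 - (\mathbf{H}^*\mathbf{Z}_2 + \mathbf{\eta}^*\mathbf{1}_N)\|_F$. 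These minimizers are attained because each objective is a continuous least-squares problem, so no existence issue arises.

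The core idea is to exhibit a single feasible pair $(\mathbf{W}, \mathbf{b})$ for the $\mathbf{Z}_2$ regression whose objective value already beats the claimed bound; since the left-hand side is a minimum over all $(\mathbf{W}, \mathbf{b})$, this suffices. The natural candidate is obtained by composing the two optimal affine maps: set $\mathbf{W} = \mathbf{W}^*_1 \mathbf{H}^*$ and $\mathbf{b} = \mathbf{W}^*_1 \mathbf{\eta}^* + \mathbf{b}^*_1$. Here I would use the identity $\mathbf{W}^*_1(\mathbf{\eta}^* \mathbf{1}_N) = (\mathbf{W}^*_1 \mathbf{\eta}^*)\mathbf{1}_N$, valid because $\mathbf{1}_N$ is a row vector, to rewrite $\mathbf{W}\mathbf{Z}_2 + \mathbf{b}\mathbf{1}_N = \mathbf{W}^*_1(\mathbf{H}^*\mathbf{Z}_2 + \mathbf{\eta}^*\mathbf{1}_N) + \mathbf{b}^*_1\mathbf{1}_N$.

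Next I would insert and subtract the optimal $\mathbf{Z}_1$ fit $\mathbf{W}^*_1\mathbf{Z}_1 + \mathbf{b}^*_1\mathbf{1}_N$ inside the residual $\mathbf{Y} - (\mathbf{W}\mathbf{Z}_2 + \mathbf{b}\mathbf{1}_N)$ and apply the triangle inequality for $\|\cdot\|_F$. The first resulting term is exactly $\min_{\mathbf{W},\mathbf{b}} \|\mathbf{Y} - (\mathbf{W}\mathbf{Z}_1 + \mathbf{b}\mathbf{1}_N)\|_F$, by definition of $(\mathbf{W}^*_1, \mathbf{b}^*_1)$. The second term simplifies, via the substitution above, to $\|\mathbf{W}^*_1(\mathbf{Z}_1 - \mathbf{H}^*\mathbf{Z}_2 - \mathbf{\eta}^*\mathbf{1}_N)\|_F$, where $\mathbf{W}^*_1$ factors out cleanly. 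Applying submultiplicativity $\|\mathbf{A}\mathbf{B}\|_F \leq \|\mathbf{A}\|_F\|\mathbf{B}\|_F$ bounds this by $\|\mathbf{W}^*_1\|_F \, \|\mathbf{Z}_1 - (\mathbf{H}^*\mathbf{Z}_2 + \mathbf{\eta}^*\mathbf{1}_N)\|_F$, which equals $\|\mathbf{W}^*_1\|_F$ times the optimal reconstruction error by the choice of $(\mathbf{H}^*, \mathbf{\eta}^*)$. Summing the two bounds yields the claim.

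The main obstacle is bookkeeping with the affine (bias) terms rather than any deep inequality: I must verify that the composed bias $\mathbf{W}^*_1\mathbf{\eta}^* + \mathbf{b}^*_1$ has the correct shape and that the outer-product structure $\mathbf{\eta}\mathbf{1}_N$ composes correctly under left-multiplication by $\mathbf{W}^*_1$. A secondary point is the choice of matrix-norm inequality: Frobenius submultiplicativity suffices to match the statement, though one could sharpen the constant to the spectral norm via $\|\mathbf{A}\mathbf{B}\|_F \leq \|\mathbf{A}\|_2\|\mathbf{B}\|_F$; I would retain $\|\cdot\|_F$ to align with the lemma as stated.
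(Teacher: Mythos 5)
Your proposal is correct and follows essentially the same route as the paper's own proof: both construct the candidate pair $(\mathbf{W}^*_1\mathbf{H}^*,\ \mathbf{b}^*_1 + \mathbf{W}^*_1\mathbf{\eta}^*)$ from the composed affine maps, apply the triangle inequality around the optimal $\mathbf{Z}_1$ fit, and finish with Frobenius submultiplicativity. Your added remarks on attainment of the minimizers and the possible sharpening to the spectral norm $\|\mathbf{A}\mathbf{B}\|_F \leq \|\mathbf{A}\|_2\|\mathbf{B}\|_F$ are fine but do not change the argument.
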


\begin{proof}
Suppose $\mathbf{H}^*, \mathbf{\eta}^* = \argmin_{\mathbf{H}, \mathbf{\eta}}  \|\mathbf{Z}_1 - (\mathbf{H} \mathbf{Z}_2 + \mathbf{\eta} \mathbf{1}_N )\|_F$. Then $\min_{\mathbf{W}, \mathbf{b}}  \|\mathbf{Y} - (\mathbf{W} \mathbf{Z}_2 + \mathbf{b} \mathbf{1}_N )\|_F \leq  \|\mathbf{Y} - (\mathbf{W}^*_1\mathbf{H}^* \mathbf{Z}_2 + (\mathbf{b}^*_1+\mathbf{W}^*_1 \eta^*) \mathbf{1}_N )\|_F \leq \|\mathbf{Y} - (\mathbf{W}^*_1 \mathbf{Z}_1 + \mathbf{b}^*_1 \mathbf{1}_N ) \|_F + \|\mathbf{W}^*_1(\mathbf{Z}_1 - (\mathbf{H}^* \mathbf{Z}_2 + \mathbf{\eta}^* \mathbf{1}_N )) \|_F \leq  \|\mathbf{Y} - (\mathbf{W}^*_1\mathbf{H}^* \mathbf{Z}_2 + (\mathbf{b}^*_1+\mathbf{W}^*_1 \eta^*) \mathbf{1}_N )\|_F \leq \|\mathbf{Y} - (\mathbf{W}^*_1 \mathbf{Z}_1 + \mathbf{b}^*_1 \mathbf{1}_N ) \|_F + \|\mathbf{W}^*_1\|_F \|\mathbf{Z}_1 - (\mathbf{H}^* \mathbf{Z}_2 + \mathbf{\eta}^* \mathbf{1}_N ) \|_F$.     
\end{proof}

From lemma \ref{approx}, we know that the regression error of $\mathbf{Z}_1$ to $\mathbf{Z}_2$ is crucial for understanding the differences of representations. We further bound the regression error with rank and singular values in the following lemma \ref{rank and singuar}.

\begin{lemma} \label{rank and singuar Appendix}
Suppose $\mathbf{Z}_1 = [\mathbf{z}^{(1)}_1 \cdots \mathbf{z}^{(1)}_N] \in \mathbb{R}^{d{'} \times N}$ and $\mathbf{Z}_2 = [\mathbf{z}^{(2)}_1 \cdots \mathbf{z}^{(2)}_N] \in \mathbb{R}^{d \times N}$ and $\text{rank}(\mathbf{Z}_1) > \text{rank}(\mathbf{Z}_2)$. Denote the singular value of $\frac{\mathbf{Z}_1}{\sqrt{N}}$ as $\sigma_1 \geq \cdots \geq \sigma_{N}$. Then $\min_{\mathbf{H}, \mathbf{\eta}} \frac{1}{N} \|\mathbf{Z}_1 - (\mathbf{H} \mathbf{Z}_2 + \mathbf{\eta} \mathbf{1}_N )\|^2_F \geq \sum^{\text{rank}(\mathbf{Z}_1)}_{j= \text{rank}(\mathbf{Z}_2)+2} (\sigma_j)^2$.  
\end{lemma}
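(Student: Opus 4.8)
The plan is to reduce the minimization to an orthogonal projection problem and then invoke a Ky Fan type extremal principle for singular values. Throughout, write $r_1 = \text{rank}(\mathbf{Z}_1)$ and $r_2 = \text{rank}(\mathbf{Z}_2)$, and set $A = \mathbf{Z}_1/\sqrt{N}$, so that $A$ has singular values $\sigma_1 \ge \cdots \ge \sigma_N$ with $\sigma_j = 0$ for $j > r_1$.

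First I would reformulate the objective as a projection. Each row of $\mathbf{H}\mathbf{Z}_2 + \mathbf{\eta}\mathbf{1}_N$ is a linear combination of the rows of $\mathbf{Z}_2$ plus a scalar multiple of $\mathbf{1}_N^T$, so every such matrix has all of its rows lying in the subspace $V := \text{rowspace}(\mathbf{Z}_2) + \text{span}(\mathbf{1}_N^T) \subseteq \mathbb{R}^N$, and $\dim V \le r_2 + 1$. Conversely, as $\mathbf{H}$ and $\mathbf{\eta}$ range freely the rows of $\mathbf{H}\mathbf{Z}_2 + \mathbf{\eta}\mathbf{1}_N$ may be chosen independently and arbitrarily within $V$. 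Hence the minimization decouples across the rows of $\mathbf{Z}_1$, and the optimal matrix projects each row of $\mathbf{Z}_1$ orthogonally onto $V$. Letting $P \in \mathbb{R}^{N\times N}$ denote the orthogonal projection onto $V^\perp$, this gives $\min_{\mathbf{H},\mathbf{\eta}} \frac{1}{N}\|\mathbf{Z}_1 - (\mathbf{H}\mathbf{Z}_2 + \mathbf{\eta}\mathbf{1}_N)\|_F^2 = \frac{1}{N}\|\mathbf{Z}_1 P\|_F^2 = \|A P\|_F^2$.

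Next I would lower-bound $\|AP\|_F^2$. Since $I-P$ is the orthogonal projection onto $V$, it has rank $\dim V \le r_2 + 1$. The key ingredient is the Ky Fan maximum principle, which states that for any orthogonal projection $Q$ of rank $m$ one has $\|AQ\|_F^2 = \text{tr}(A^T A Q) \le \sum_{j=1}^m \sigma_j^2$. Applying this to $Q = I - P$, together with the orthogonal decomposition $\|A\|_F^2 = \|AP\|_F^2 + \|A(I-P)\|_F^2$, yields $\|AP\|_F^2 \ge \sum_{j=1}^N \sigma_j^2 - \sum_{j=1}^{r_2+1}\sigma_j^2 = \sum_{j=r_2+2}^N \sigma_j^2$. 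Finally, because $\sigma_j = 0$ for $j > r_1$, the tail collapses to $\sum_{j=r_2+2}^{r_1}\sigma_j^2$, which is exactly the asserted bound.

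The conceptual steps are standard linear algebra, so I do not anticipate a deep obstacle; the work lies in careful bookkeeping. The two points requiring attention are: (i) confirming the decoupling over rows, and that including the $\mathbf{\eta}\mathbf{1}_N$ term enlarges $\dim V$ by at most one — this single extra dimension is precisely what shifts the lower summation index from $r_2+1$ to $r_2+2$; and (ii) invoking the extremal singular value principle in the correct (minimizing) direction on $\|AP\|_F^2$ and tracking that only the singular values $\sigma_{r_2+2},\ldots,\sigma_{r_1}$ survive, the rest vanishing by the rank bound on $\mathbf{Z}_1$. The one genuinely nontrivial tool is the Ky Fan maximum principle, which I would either cite or derive from the Rayleigh–Ritz characterization of $\text{tr}(Q A^T A Q)$ as a sum of the top eigenvalues of $A^T A$.
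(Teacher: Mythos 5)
Your proof is correct, and it reaches the bound by a route that is recognizably different from the paper's. The paper's argument is a relaxation: it observes that any feasible approximant $\mathbf{H}\mathbf{Z}_2 + \mathbf{\eta}\mathbf{1}_N$ has rank at most $\operatorname{rank}(\mathbf{Z}_2)+1$, and then invokes the Eckart--Young--Mirsky theorem as a black box to lower-bound the error against \emph{all} matrices of that rank by the tail sum $\sum_{j \ge \operatorname{rank}(\mathbf{Z}_2)+2}\sigma_j^2$. You instead characterize the feasible set exactly --- matrices whose rows lie in $V = \operatorname{rowspace}(\mathbf{Z}_2) + \operatorname{span}(\mathbf{1}_N^T)$, with $\dim V \le \operatorname{rank}(\mathbf{Z}_2)+1$ --- solve the constrained minimization in closed form as the projection residual $\|AP\|_F^2$ onto $V^\perp$, and only then pass to a spectral bound via Ky Fan's maximum principle and the complementary decomposition $\|A\|_F^2 = \|AP\|_F^2 + \|A(I-P)\|_F^2$. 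Both proofs pivot on the same key observation (the intercept $\mathbf{\eta}\mathbf{1}_N$ costs at most one extra dimension, which is what shifts the index to $r_2+2$), so the mathematical content is close; the trade-off is that the paper's version is shorter given EYM, and is slightly more general in that it never uses the affine structure of the feasible set beyond its rank, whereas your version is more self-contained, yields the exact value of the minimum (not merely a lower bound) before any inequality is applied, and replaces EYM by the arguably more elementary Rayleigh--Ritz/Ky Fan principle. Either write-up would be acceptable; if you wanted to match the paper's brevity you could also note that your projection step plus Ky Fan is essentially a proof of the Frobenius-norm EYM theorem restricted to this setting.
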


\begin{proof}
The proof idea is similar to \citep{garrido2023rankme}. Suppose $\mathbf{H}^*, \mathbf{\eta}^* = \argmin_{\mathbf{H}, \mathbf{\eta}} \frac{1}{N} \|\mathbf{Z}_1 - (\mathbf{H} \mathbf{Z}_2 + \mathbf{\eta} \mathbf{1}_N )\|^2_F$ and $r = \text{rank}(\mathbf{H}^* \mathbf{Z}_2 + \mathbf{\eta}^* \mathbf{1}_N )$.

Then from Eckart–Young–Mirsky theorem $\frac{1}{N} \|\mathbf{Z}_1 - (\mathbf{H}^* \mathbf{Z}_2 + \mathbf{\eta}^* \mathbf{1}_N )\|^2_F \geq \sum^{N}_{j= r+1} (\sigma^{(1)}_j)^2$. And note $r \leq \text{rank}(\mathbf{Z}_2)+1$ and singular value index bigger than rank is $0$. The conclusion follows.
\end{proof}

The bound given by lemma \ref{rank and singuar} is not that straightforward to understand. Assuming the features are normalized, we successfully derived the connection of regression error and ratio of ranks in theorem \ref{rank ratio}.

\begin{theorem} \label{rank ratio Appendix}
Suppose $\| \mathbf{z}^{(1)}_j \|_2=1$, where ($1 \leq j \leq N$). Then lower bound of approximation error can be upper-bounded as follows:
$\sum^{\text{rank}(\mathbf{Z}_1)}_{j= \text{rank}(\mathbf{Z}_2)+2} (\sigma_j)^2 \leq \frac{\text{rank}(\mathbf{Z}_1)-\text{rank}(\mathbf{Z}_2)-1}{\text{rank}(\mathbf{Z}_1)} \leq 1-\frac{\text{rank}(\mathbf{Z}_2)}{\text{rank}(\mathbf{Z}_1)}$.
\end{theorem}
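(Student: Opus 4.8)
The plan is to exploit the unit-norm assumption on the columns of $\mathbf{Z}_1$ to pin down the total mass of the squared singular values, and then reduce the claimed bound to an elementary averaging inequality. Write $r_1 = \text{rank}(\mathbf{Z}_1)$ and $r_2 = \text{rank}(\mathbf{Z}_2)$. The first step is the normalization
$$
\sum_{j=1}^N \sigma_j^2 = \left\| \frac{\mathbf{Z}_1}{\sqrt{N}} \right\|_F^2 = \frac{1}{N}\sum_{j=1}^N \| \mathbf{z}^{(1)}_j \|_2^2 = \frac{1}{N}\cdot N = 1,
$$
using that the squared Frobenius norm equals the sum of squared singular values together with $\| \mathbf{z}^{(1)}_j \|_2 = 1$. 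Since $\mathbf{Z}_1$ has rank $r_1$, exactly $r_1$ of the $\sigma_j$ are nonzero, so in fact $\sum_{j=1}^{r_1} \sigma_j^2 = 1$.

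Next I would identify the summation range. The terms $\sigma_{r_2+2}^2, \ldots, \sigma_{r_1}^2$ are precisely the smallest $k := r_1 - r_2 - 1$ values among the $r_1$ nonzero quantities $\sigma_1^2 \geq \cdots \geq \sigma_{r_1}^2$ (the count $r_1 - (r_2+2) + 1 = r_1 - r_2 - 1$ checks out, and a higher index means a smaller value). The core fact to establish is then purely combinatorial: if $a_1 \geq \cdots \geq a_{r_1} \geq 0$ sum to $1$, the sum of the smallest $k$ of them is at most $k/r_1$. This holds because the average of the smallest $k$ entries cannot exceed the overall average $1/r_1$; concretely one verifies $(r_1 - k)\sum_{i > r_1 - k} a_i \leq k \sum_{i \leq r_1 - k} a_i$ by comparing, term by term over all admissible pairs, each small entry $a_i$ ($i > r_1-k$) against each large entry $a_j$ ($j \leq r_1 - k$), where $a_i \leq a_j$. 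Applying this with $a_j = \sigma_j^2$ yields the first inequality $\sum_{j=r_2+2}^{r_1} \sigma_j^2 \leq \frac{r_1 - r_2 - 1}{r_1}$.

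The second inequality is immediate algebra, since $\frac{r_1 - r_2 - 1}{r_1} = 1 - \frac{r_2 + 1}{r_1} \leq 1 - \frac{r_2}{r_1}$. I expect no serious obstacle; the only point requiring care is the bookkeeping that the index window $[r_2+2,\, r_1]$ selects exactly the smallest $k$ nonzero singular values, so that the averaging bound is applied against the full normalized mass $\sum_{j=1}^{r_1}\sigma_j^2 = 1$ rather than the partial sum running to $N$. Note that trying to bound each term individually by $1/r_1$ would fail, since a small squared singular value need not individually lie below the mean; it is the averaging (not pointwise) comparison that makes the bound correct, with equality attained at the flat spectrum $\sigma_1^2 = \cdots = \sigma_{r_1}^2 = 1/r_1$.
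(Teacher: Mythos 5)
Your proof is correct and takes essentially the same approach as the paper's: the unit-norm columns give $\sum_j \sigma_j^2 = \|\mathbf{Z}_1/\sqrt{N}\|_F^2 = 1$, and since the $\sigma_j$ are sorted decreasingly, the sum of the smallest $\text{rank}(\mathbf{Z}_1)-\text{rank}(\mathbf{Z}_2)-1$ nonzero squared singular values is at most their proportional share $\frac{\text{rank}(\mathbf{Z}_1)-\text{rank}(\mathbf{Z}_2)-1}{\text{rank}(\mathbf{Z}_1)}$ of that total. The paper compresses this into a single sentence ("the summation of the squared singular values is $1$ and they are ranked by index"); your write-up simply makes the averaging inequality and the final algebraic comparison explicit, which is exactly the content the paper leaves implicit.
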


\begin{proof}
The proof is direct by noticing the summation of the square of singular values is $1$ and we have already ranked singular values by their indexes.    
\end{proof}

\end{document}